\renewcommand{\labelenumi}{(\alph{enumi})}
\renewcommand\theenumi\labelenumi
\DeclareMathOperator{\Prob}{Pr}
\newcommand{\prob}[1]{\mathord{\Pr}\mathord{\left(#1\right)}}
\newcommand{\expect}[1]{\mathord{E}\mathord{\left(#1\right)}}
\newcommand{\W}{W}
\newcommand{\lambert}{Lambert~$W$\xspace}
\newcommand{\ie}{i.\,e.\xspace}
\newcommand{\eg}{e.\,g.\xspace}
\newtheorem{theorem}{Theorem}
\newtheorem{lemma}[theorem]{Lemma}
\newtheorem{corollary}[theorem]{Corollary}
\newenvironment{proofof}[1]{\begin{proof}[Proof of~#1]}{\end{proof}}
\newcommand{\tend}{t_{\mathrm{end}}}
\newcommand{\wmax}{w_{\max}}
\newcommand{\wmin}{w_{\min}}
\newcommand{\smax}{s_{\max}}
\newcommand{\smin}{s_{\min}}
\newcommand{\om}{\textsc{OneMax}\xspace}
\newcommand{\onemax}{\om}
\newcommand{\oea}{\mbox{${(1 + 1)}$~EA}\xspace}
\newcommand{\ooea}{\oea}
\newcommand{\valley}{\textsc{Valley}\xspace}
\newcommand{\R}{\ensuremath{\mathbb{R}}}
\newcommand{\N}{\ensuremath{\mathbb{N}}}
\newcommand{\Z}{\ensuremath{\mathbb{Z}}}
\newcommand{\eps}{\varepsilon}
\newcommand{\tbase}{T_{base}}
\newcommand{\gammatimebase}{4.21 \gamma mn \ln (2m^2/\delta)}
\newcommand{\timebase}{4.21 mn \ln (2m^2/\delta)}
\newcommand{\minimuma}{\ln (4(\ell-1)/\delta)}
\let\originalleft\left
\let\originalright\right
\renewcommand{\left}{\mathopen{}\mathclose\bgroup\originalleft}
\renewcommand{\right}{\aftergroup\egroup\originalright}
\author{
Benjamin Doerr\\
Laboratoire d'Informatique (LIX)\\ CNRS, \'Ecole Polytechnique\\ Institut Polytechnique de Paris\\ Palaiseau, France \\
\and
  Amirhossein Rajabi\\
  DTU Compute\\
  Technical University of Denmark \\
	Kgs. Lyngby, Denmark \\
  \and
    Carsten Witt\\
    DTU Compute\\
  Technical University of Denmark \\
	Kgs. Lyngby, Denmark
}
\title{Simulated Annealing is a\\Polynomial-Time Approximation Scheme for the \\Minimum Spanning Tree Problem}
\begin{document}
\maketitle

{\sloppy
\begin{abstract}
 We prove that Simulated Annealing with an appropriate cooling schedule computes arbitrarily tight constant-factor approximations to the minimum spanning tree problem in polynomial time. This result was conjectured by Wegener (2005). More precisely, denoting by $n, m, \wmax$, and $\wmin$ the number of vertices and edges as well as the maximum and minimum edge weight of the MST instance, we prove that simulated annealing with initial temperature $T_0 \ge \wmax$ and multiplicative cooling schedule with factor $1-1/\ell$, where $\ell = \omega  (mn\ln(m))$, with probability at least $1-1/m$  computes in time $O(\ell (\ln\ln (\ell) + \ln(T_0/\wmin) ))$ a spanning tree with weight at most $1+\kappa$ times the optimum weight, where $1+\kappa =  \frac{(1+o(1))\ln(\ell m)}{\ln(\ell) -\ln (mn\ln (m))}$. Consequently, for any  $\eps>0$, we can choose $\ell$ in such a way that a $(1+\eps)$-approximation is found in time $O((mn\ln(n))^{1+1/\eps+o(1)}(\ln\ln n + \ln(T_0/\wmin)))$ with probability at least~$1-1/m$.  In the special case of so-called $(1+\eps)$-separated weights, this algorithm computes an optimal solution (again in time $O( (mn\ln(n))^{1+1/\eps+o(1)}(\ln\ln n + \ln(T_0/\wmin)))$), which is a significant speed-up over Wegener's runtime guarantee of  $O(m^{8 + 8/\eps})$.
\end{abstract}

\section{Introduction}

The theory of randomized search heuristics, mostly in the last 25 years, has considerably increased our understanding of this class of algorithms. A closer look at this field shows that in the early years, significant efforts were devoted also to simulated annealing~(SA) \citep{SasakiH88,JerrumS98,Wegener05,JansenW07}, whereas more recently these algorithms at most appear in side results of works focused on other heuristics. Due to this decline in attention, the gap between theory and practice, at least as wide in heuristics as in classic algorithms, is even wider for~SA. 

Since we do not see a reducing interest in SA in practice~\citep{FranzinS19}, with this first theoretical work solely devoted to SA after a longer time, we aim at reviving the theoretical analysis of this famous  heuristic. To this aim, we revisit a classic problem, namely how SA computes minimum spanning trees (MSTs)~\citep{Wegener05}. We are, of course, not finally interested in using SA for this purpose -- for this several very efficient near-linear time algorithms are known --, but we use this problem to try to understand the working principles of SA.

Wegener's seminal work~\citep{Wegener05} is well-known for the construction of an instance of the MST problem where the Metropolis algorithm with any fixed temperature fails badly, but SA with a simple multiplicative cooling schedule computes an optimal solution efficiently. Much less known, but equally interesting is another result in this work, namely that SA with a suitable multiplicative cooling schedule can efficiently find optimal solutions to the MST problem when the edge weights are $(1+\eps)$-separated. 

\begin{theorem}[\cite{Wegener05}]\label{thm:wegener}
  Let $G = (V, E)$ with $w : E \to \Z_{>0}$ be an instance of the MST problem. Let $\eps > 0$ be such that for all edges $e_1, e_2 \in E$, we have that $w(e_1) > w(e_2)$ implies $w(e_1) \ge (1+\eps) w(e_2)$. Assume further that $w(e) \le 2^m$ for all $e \in E$. Then SA with initial temperature $T_0 = 2^m$ and cooling factor $\beta = (1+\eps/2)^{-m^{-7-8/\eps}}$ with probability $1 - O( 1/m)$ finds an optimal solution in at most $2 \log_2(1+\eps/2)^{-1} m^{8 + 8/\eps}$ iterations.
\end{theorem}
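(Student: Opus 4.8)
The plan is to run simulated annealing on the set of \emph{connected} spanning subgraphs of $G$, started from $E$ itself: an elementary step flips the membership of a uniformly random edge; an edge removal that disconnects the current subgraph is rejected; every other edge removal is accepted, since it lowers the weight; and the insertion of an edge $e$ is accepted with probability $e^{-w(e)/T}$. So the subgraph stays connected, removals are ``free'', and only insertions feel the temperature. Write the distinct edge weights as $v_1<v_2<\dots<v_k$; by $(1+\eps)$-separation and $v_k\le 2^m$ we have $k=O(m/\eps)$. For a function $c=c(\eps)=\Theta(1/\eps)$ to be fixed, let $\tau_j$ be the first iteration at which the temperature has fallen below $v_j/(c\ln m)$. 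The cooling schedule is slow enough that there are at least $m^{7+8/\eps}$ iterations between $\tau_{j+1}$ and $\tau_j$ (the temperature falls there by a factor $\ge 1+\eps$, hence by at least one factor $1+\eps/2$, each of which costs $m^{7+8/\eps}$ iterations), and after $2\log_2(1+\eps/2)^{-1}m^{8+8/\eps}$ iterations the temperature is below $2^{-m}<v_1/(c\ln m)$; this gives the stated iteration bound directly.

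The quantity to track is $W_j(s):=\sum_{e\in s,\ w(e)\ge v_j}w(e)$, the weight of the edges of $s$ of weight at least $v_j$. The main claim, proved by induction on $j$ from $k$ down to $1$, is: with probability $\ge 1-1/m^2$, from iteration $\tau_j$ on one has $W_j=W_j^{*}$, where $W_j^{*}$ is the minimum of $W_j$ over all connected spanning subgraphs. Two ingredients enter. Combinatorially, $W_j^{*}$ is the total weight of the edges of weight $\ge v_j$ in any MST (read off Kruskal run light-to-heavy, or via a matroid exchange argument), and for most subgraphs with $W_j(s)>W_j^{*}$ there is a ``light repair'': inserting a few edges of weight $<v_j$ so as to create a cycle through some edge $f\in s$ of weight $\ge v_j$ (possible because $s$ is connected), and then deleting $f$; this strictly decreases $W_j$ and uses only insertions of edges of weight $<v_j$, hence of weight $\le v_{j-1}\le v_j/(1+\eps)$. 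Probabilistically, once $T<v_j/(c\ln m)$ any step increasing $W_j$ must insert an edge of weight $\ge v_j$, which is accepted with probability at most $e^{-v_j/T}\le m^{-c}$; for $c$ chosen large (of order $1/\eps$) this is small enough that, over the $O(m^{8+8/\eps})$ iterations of the whole run and the $O(m/\eps)$ classes, no such insertion ever happens on the good event, so after $\tau_j$ the value $W_j$ is non-increasing and bounded below by $W_j^{*}$.

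It then remains to show $W_j$ has actually reached $W_j^{*}$ by iteration $\tau_j$. On the window $[\tau_{j+1},\tau_j)$ the heavier classes $v_{j+1},\dots,v_k$ are already settled (induction hypothesis), and the temperature there is $\Omega(v_j/\ln m)$, so each light-repair insertion above is accepted with probability at least $\frac1m e^{-v_{j-1}/T}\ge m^{-c/(1+\eps)-1}=m^{-\Theta(1/\eps)}$ per iteration; moreover the edges of a partial path being built toward $f$ carry no cycle and hence cannot be removed before the path is closed, so the repairs need not be executed in consecutive steps. Standard waiting-time and concentration estimates then show that at most $O(mn)$ such repairs bring $W_j$ down to $W_j^{*}$ within $m^{7+8/\eps}$ iterations with probability $\ge 1-1/m^2$. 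A union bound over the $O(m/\eps)$ classes leaves total failure probability $O(1/m)$; on the good event the final subgraph has $W_1=W_1^{*}$, i.e.\ it has minimum total weight, and since it is connected and spanning it is an MST.

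I expect the main obstacle to be not any single estimate but the global bookkeeping that makes the per-class invariants robust to the uphill moves SA keeps making, and getting the resulting constants to fit the stated exponent. Two points in particular: the freezing threshold $c$ in $v_j/(c\ln m)$ must be of order $1/\eps$ so that insertions of heavy edges are suppressed for the \emph{whole} remaining run, and this is precisely what forces the $1/\eps$ into the exponent of the phase length, since at such a temperature an insertion of an edge only a factor $1+\eps$ below the current scale already costs $m^{\Theta(1/\eps)}$ iterations; and I glossed over the case in which the light edges of $s$ already span the graph of edges of weight $<v_j$ while its heavy edges form a non-minimum completion, where the repair genuinely needs a heavy insertion — one must show this configuration has already been corrected while the temperature was still ``warm'' (above $\Omega(v_j/\ln m)$), which is where the phase has to be long enough and the warm and frozen sub-windows for consecutive classes have to overlap correctly.
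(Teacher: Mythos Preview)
Note first that this theorem is not proved in the paper; it is quoted from \cite{Wegener05}. The paper's own result for $(1+\eps)$-separated weights is Theorem~\ref{theo:sa-eps-separated}, proved via Lemmas~\ref{lem:heavy-edges}--\ref{lem:bijection}. Your overall architecture --- freeze weight class $v_j$ once the temperature drops below $v_j/(c\ln m)$, then use the window $[\tau_{j+1},\tau_j)$ to settle that class --- matches the paper's, and your potential $W_j$ is equivalent (via telescoping) to the sorted-weight comparison in Lemma~\ref{lem:bijection}.

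The genuine gap is the repair step. Your assertion that ``the edges of a partial path being built toward $f$ carry no cycle and hence cannot be removed'' is false: since the current subgraph $s$ is always a connected spanning subgraph, \emph{every} inserted edge immediately lies on a cycle and can be removed in the next step. More fundamentally, the multi-edge repair never helps: if $f=\{u,v\}$ is a bridge in $s$ with cut $(S_u,S_v)$, any collection of light edges that puts $f$ on a cycle must contain one crossing this cut, so a single light edge already suffices whenever any light repair exists at all; inserting light edges that do not cross the cut leaves the cut unchanged. The structural fact you are missing --- and which dissolves your final worry about repairs that ``genuinely need a heavy insertion'' --- is the essential-edge argument of Lemma~\ref{lem:cheap-connected-components}: heavy edges admitting no single light swap form a forest on the light-connected components of $G$, hence number at most $n_w-1$, which is exactly the MST count. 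Under your induction hypothesis $W_{j'}=W_{j'}^*$ for $j'>j$, the condition $W_j>W_j^*$ forces strictly more than $n_{v_j}-1$ heavy edges in $s$, so at least one is non-essential and is removable either directly (it already lies on a cycle) or after a single light insertion. The non-atomic two-step swap is then handled, not by your path construction, but by an epoch-based multiplicative drift on the number of non-essential heavy edges.
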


Wegener~\citep{Wegener05} conjectured that his SA algorithm for general weights instead of $(1+\eps)$-separated ones computes $(1+\eps)$-approximate minimum spanning trees, that is, trees with weight at most $(1+\eps)$ times the weight of a true minimum spanning tree. While this conjecture is very natural, it was never proven.

Our main result is that Wegener's conjecture is indeed true, even though our proof does not confirm his statement that ``it is easy to generalize our result to prove that SA is always highly successful if one is interested in $(1 + \eps)$-optimal spanning trees.'' More precisely, we show the following result (see Theorem~\ref{theo:approx-depend-ell} for a slightly stronger, but more complicated version of this result). We note that SA cannot compute $(1+\eps)$-approximations for sub-constant $\eps$, see again~\citep{Wegener05}, so in this sense our result is as good as possible.

{\addtolength\leftmargini{-5ex}
\begin{quote}
\itshape    Let $\eps>0$ be a constant. Consider a run of SA with cooling factor $\beta = 1 - 1/\ell$, where $\ell = (mn \ln (m))^{1+ 1/\eps+o(1)}$, 
and $T_0 \ge \wmax$ on an instance of the MST problem. Then there is a time $T^*=O((mn\ln(n))^{1+1/\eps+o(1)}(\ln \ln n+\ln(T_0/\wmin)))$
 such that with probability at least $1-1/m$, at all times $t \ge T^*$ the current solution is a $(1+\eps)$\nobreakdash-approximation.
\end{quote}}

Due to the use of proof methods not available at that time, our time bound is significantly better than Wegener's. To compute a $(1+\eps)$-approximation, or to compute an optimal solution when the edge weights are $(1+\eps)$-separated (see Theorem~\ref{theo:sa-eps-separated}), our runtime guarantee is roughly $O((mn\log n)^{1 + 1/\eps} \log \frac{\wmax}{\wmin})$ as opposed to $O(m^{8 + 8/\eps})$ in Theorem~\ref{thm:wegener}.

Mostly because of a different organization of the proof, our result gives more insights into the influence of the algorithm parameters. Our result only applies to initial temperatures $T_0$ that are at least the maximum edge weight. This is very natural since with substantially smaller temperatures, the heaviest edge cannot be included in the solution with reasonable probability (this follows right from the definition of the algorithm). It is also not difficult to prove that once the temperature is somewhat below the smallest edge weight, then no new edges will ever enter the solution (see Lemma~\ref{lem:heavy-edges} for the precise statement of this result). This implies that there is no reason to run the algorithm longer than roughly for time $\log_{1/\beta}(T_0 / \wmin) = O(\ell \log(T_0 / \wmin))$, see Theorem~\ref{theo:sa-approximation} for the details. From the perspective of the algorithm user, this is an interesting insight since it gives an easy termination criterion. Also without understanding the precise influence of the cooling factor~$\beta$ on the approximation quality, this insight motivates to use the algorithm for decreasing values of $\beta$, say $\beta_i = 2^{-i}$, always until the above-determined time is reached, and follow this procedure until a sufficiently good MST approximation is found.

The remainder of this paper is organized as follows. In Section~\ref{sec:previous}, we describe the most relevant previous works. We define SA and the minimum spanning tree problem in Section~\ref{sec:prelims}. The core of this work is our mathematical runtime analysis in Section~\ref{sec:runtime}. 
Afterwards, in Section~\ref{sec:sa-eps-separated}, we give the result carried out for the MST problem with $(1+\eps)$-separated weights. 
The paper ends with a conclusion and a discussion of possible future works.

\section{Previous Work}\label{sec:previous}

As mentioned in the introduction, there are relatively few runtime analyses for SA as discrete optimization algorithm, see also the survey~\cite{Jansen11}.

The first such result~\cite{SasakiH88} proves that SA can compute good approximations to the maximum matching problem. A closer look at the result reveals that a constant temperature is used, that is, the SA algorithm is in fact the special case of the Metropolis algorithm. It has to be noted that to obtain a particular approximation quality, the temperature has to be set suitably. In this light, the following result from~\cite{GielW03} shows a light advantage for evolutionary algorithms: When running the \oea with standard mutation rate on this problem, then the expected first time to find a $(1+\eps)$\nobreakdash-approximation is $O(m^{2\lceil 1/\eps \rceil})$. Note that in this result, the parameters of the algorithm do not need to be adjusted to the desired approximation rate.

For a different problem, namely the bisection problem, it was shown in~\cite{JerrumS98} that SA, again with constant temperature, can solve certain random instances in quadratic time. 

Wegener's above mentioned work~\citep{Wegener05} on the MST problem was the first to show that for some non-artificial problem, a non-trivial cooling schedule is necessary. 

A runtime analysis of the Metropolis algorithm on the classic benchmark \onemax was conducted in~\cite{JansenW07}. Not surprisingly, the ability to accept inferior solutions is not helpful when optimizing this unimodal function. The interesting side of this result, though, is that the Metropolis algorithm is efficient on \onemax only for very small temperatures of asymptotic order $O(\log(n)/n)$. 

A recent study~\citep{WangZD21} on the deceiving-leading-blocks (DLB) problem shows that here the Metropolis algorithm with a constant temperature has a good performance, beating the known runtime results for evolutionary algorithms by a factor of $\Theta(n)$. We note that the DLB problem, just as the MST problem, has many local optima which all can be left by flipping two bits. 

As side results of a fundamental analysis of hyper-heuristics, two easy lower bounds on the runtime of the Metropolis algorithm (that is, SA with constant temperature) are proven in~\cite{LissovoiOW19}: (i)~The Metropolis algorithm needs time $\tilde \Omega(n^{d-1/2})$ on cliff functions with constant cliff width~$d$ and super-polynomial time when the cliff width is super-polynomial. (ii)~The Metropolis algorithm with a temperature small enough to allow efficient hill-climbing needs exponential time to optimize jump functions. 

As part of a broader analysis of single-trajectory search heuristics, it was found that the Metropolis algorithm can optimize all weakly monotonic pseudo-Boolean functions in at most exponential time~\cite{Doerr21tcsUB}.

Some more results exist on problems designed for demonstrating a particular phenomenon. In~\cite{DrosteJW00}, a problem called \valley is designed that has the property the Metropolis algorithm with any temperature needs at least exponential expected time, whereas SA with a suitable cooling schedule only needs time $O(n^5 \log n)$. In~\cite{JansenW07}, examples are constructed where one of \oea and SA has a small polynomial runtime and the other has an exponential runtime. Also, a class of functions is constructed where both algorithms have a similar performance despite dealing with the local optimum in a very different manner. In~\cite{OlivetoPHST18}, a class of problems with tunable width and depths of a valley of low fitness is proposed. It is proven that the performance of the elitist \oea is mostly influenced by the width of the valley, whereas the performance of the Metropolis algorithm and a similar non-elitist algorithm inspired from population genetics is mostly influenced by the depths of the valley.

For evolutionary algorithms, for which the theory is more developed than for SA, there are a larger number of  results showing that they can serve as approximation algorithms 
for  optimization problems, including NP-hard problems  \citep{NeumannW10}. However, 
results describing an approximation scheme where the user can provide a parameter~$\eps$ to the 
evolutionary algorithm to compute a $(1+\eps)$-approximation are rare; apart 
from the maximum matching problem mentioned above, we are only aware of related results for parallel (1+1)~EAs, (1+1)~EAs with ageing and 
simple artificial immune systems on the number partitioning problem \citep{WittSTACS05,CorusOYAIJ19} and for 
an evolutionary algorithm on the multi-objective shortest path problem \citep{HorobaECJ10}. Evolutionary algorithms 
that approximate the optimum are also known in the subfield of fixed-parameter tractability. 
While most of these results prove an approximation within a constant factor or growing slowly with the problem dimension, 
there are 
also statements similar to approximation schemes for the vertex cover problem \citep{NeumannSuttonFPTBookChapter2020}. However,
in general 
it is safe to say that there are only few 
results in the literature that characterize 
very simple randomized search heuristics like the   \oea and SA as  
polynomial-time approximation schemes for 
classical (non-noisy) combinatorial optimization problems.

Finally, we remark that the classical \ooea and a variant of randomized local search can solve the MST problem in expected pseudo-polynomial time $O(m^2 \log(n\wmax))$ \citep{NeumannW07}. While SA in general does not solve the problem in expected polynomial time, its time bound to achieve a $(1+\epsilon)$-approximation (see  Theorem~\ref{theo:approx-depend-ell} below)
can be smaller than the time bound for the \ooea in certain cases where $m=\omega(n)$ and $\epsilon$ is a constant. 

\section{Preliminaries}\label{sec:prelims}

We now define the SA algorithm and the MST problem. Also, we state a technical tool our main proof builds on. 

\begin{algorithm}[t]
	\caption{Simulated Annealing (SA) with starting temperature  $T_0$ and cooling factor $\beta\le 1$ for the minimization of $f\colon\{0,1\}^n\to \R$}
	\label{alg:sa}
	\begin{algorithmic}
	\State Select $x^{(0)}$  from $\{0, 1\}^n$.
		\For{$t \gets 0, 1, \dots$}
		\State Create $y$ by flipping a bit of~$x^{(t)}$ chosen uniformly at random.
		\If{$f(y) \le f(x^{(t)})$}
		\State $x^{(t+1)} \gets y$.
		\Else{}
		\State $x^{(t+1)} \gets y$ with probability $e^{(f(x^{(t)})-f(y))/T_t}$ and 
		\State $x^{(t+1)} \gets x^{(t)}$ otherwise.
		\EndIf
		\State $T_{t+1}\coloneqq T_t \cdot \beta$.
		\EndFor
	\end{algorithmic}
\end{algorithm}

\emph{Simulated annealing (SA)} is a simple stochastic hill-climber first proposed as optimization algorithm in~\cite{KirkpatrickGV83}. Different from a true hill-climber it may, with small probability, also accept inferior solutions. Working with bit-string representations, we use the classic \emph{bit-flip neighborhoods}, that is, the neighbors of a solution are all other solutions that differ from it in a single bit value. For the acceptance of inferior solutions, we use the widely accepted \emph{Metropolis condition}, that is, a solution with fitness loss $\delta$ over the current solution is accepted with probability $e^{-\delta/T}$, where $T$ is the current \emph{temperature}. The temperature is usually not taken as constant, but is reduced during the run of the algorithm. This allows the algorithm to accept worsening moves easy in the early stages of the run, whereas later worsening moves are accepted with smaller probability, bringing the algorithm closer to a true hill-climber. The choice of the \emph{cooling schedule} is a critical decision in the design of a SA algorithm. A popular choice, already proposed in~\cite{KirkpatrickGV83}, is a multiplicative cooling schedule (also called geometric cooling scheme). Here we start with a given temperature~$T_0$ and reduce the temperature by some factor $\beta$ in each iteration. This common variant of SA, see Algorithm~\ref{alg:sa} for the pseudocode, was regarded also in the predecessor work of Wegener~\citep{Wegener05}.

The \emph{minimum spanning tree (MST)
problem} is defined as follows. We are given an undirected, connected, weighted graph $G=(V,E)$. We denote by~$n$ its number of vertices and by~$m$ 
its number of edges. Let the set of edges be $E = \{e_1, \dots, e_m\}$. The weight of edge $e_i$, where $i\in\{1,\dots,m\}$,
is a positive number~$w_i$. We write $\wmin \coloneqq \min\{w_i \mid i\in\{1,\dots,m\}\}$ and 
$\wmax \coloneqq \max\{w_i \mid i\in\{1,\dots,m\}\}$ for the minimum and maximum edge weight.

The task in the MST problem is to find a subset $E'\subseteq E$ such that $(V,E')$ is a spanning tree of~$G$ 
having minimal total weight $w(E') = \sum_{e_i \in E'} w_i$. We use the natural bit-string representation for sets $E'$ of edges, that is, a bit string $x = (x_1,\dots,x_m) \in \{0,1\}^m$ represents the set $E(x) = \{e_i \mid x_i = 1\}$. As objective function, we use the sum of the weights of the selected edges when these form a connected graph on $V$ and $\infty$ otherwise:  
\[
f(x) = \begin{cases}
w_1x_1+\dots+w_ mx_m & \text{if $(V,E(x))$ is connected,}\\
\infty & \text{otherwise.}
\end{cases}
\]
Here $\infty$ can be replaced by an extremely
large value without essentially changing 
the result. To ensure that we start with a feasible solution (one that has finite objective value), we assume that SA is initialized with the all-ones 
string $x^{(0)}=(1,\dots,1)$. From this initial string, SA can move to 
solutions having fewer edges by flipping one-bits; however, it will never 
accept solutions that are not connected due to their 
infinitely high $f$-value. We note that, similarly 
to the analysis of the \oea on the MST 
problem \citep{NeumannW07}, one could use a more involved fitness
function to penalize connected components 
and thus lead the algorithm towards connected 
subgraphs when the current solution is not connected. However, since we assumme SA to start from a connected solution 
and connected solutions will not be replaced 
with disconnected solutions with the present definition 
of~$f$, this would not provide 
new insights. Overall, our setup is the same as the one used by Wegener~\citep{Wegener05}.

When the temperature has become sufficiently low, 
it is likely that SA has reached a solution describing 
a spanning tree. If this spanning tree is suboptimal, 
improvements require a change of at least~$2$ bits. 
Since SA only flips one bit per iteration, this is only possible by temporarily including one more edge, \ie, 
closing a cycle, and then removing another edge from 
the cycle in the next iteration. This requires a temperature still being sufficiently high for the 
temporary inclusion to be accepted.

Our measure of complexity is the first hitting time~$T^*$  
for a certain set of solutions~$S^*$, \eg, globally optimal solutions 
or solutions satisfying a certain approximation guarantee with respect to the set 
of global optima. That 
is, we give bounds on the smallest~$t$ such that SA has 
found a solution in~$S^*$. Due to the probabilistic nature of 
the algorithm, we will usually 
give bounds that hold with high probability, 
\eg, with probability~$1-1/n$. The expected value of~$T^*$ 
may be undefined since the cooling schedule may make it less 
and less likely to hit the set~$S^*$ when the algorithm has 
been unsuccessful during the steps where a promising temperature held. This is different from the analysis of, \eg, simple 
evolutionary algorithms, where one often considers  the so-called \emph{runtime} 
as the first hitting time of the set of optimal solutions 
and bounds the expected runtime. However, as described in detail by 
Wegener~\citep{Wegener05}, there are simple restart schemes for~SA that 
guarantee expected polynomial optimization times if there 
is a sufficiently high probability of a single run being successful in polynomial time.

The proof of our main result uses multiplicative drift analysis 
as state-of-the-art technical tool, which was not available to Wegener~\citep{Wegener05}. 
The  multiplicative drift theorem in Theorem~\ref{theo:multdrift-upper} below 
goes back to  
\cite{DoerrJW12algo} and was enhanced with tail bounds
in  \cite{DoerrG13algo}. We give a 
 slightly generalized presentation that can be found in~\cite{LehreW21}.

\begin{theorem} [Multiplicative Drift, cf. \citep{DoerrJW12algo,DoerrG13algo,LehreW21}] 
\label{theo:multdrift-upper}
Let $(X_t)_{t\ge 0}$, be a stochastic process, adapted to a filtration~$\mathcal{F}_t$, 
over a state space $S\subseteq \{0\}\cup [\smin,\smax]$, where 
$\smin>0$ and $\{0\}\in S$. Suppose that there exists a $\delta>0$ such that
for all $t\ge 0$, we have
\[
\expect{X_t-X_{t+1}\mid \mathcal{F}_t}\ge \delta X_t.
\]
Then the first hitting time 
$T:=\min\{t\mid X_t=0\}$ satisfies 
\[
\expect{T\mid \mathcal{F}_0} \le \frac{\ln(X_0/\smin)+1}{\delta}.
\]
Moreover, 
$\Prob(T> (\ln(X_0/\smin)+r)/\delta) \le e^{-r} $ for any $r>0$.
\end{theorem}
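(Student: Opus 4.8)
The plan is to prove the two assertions separately: first the tail bound, from which an in-expectation statement could in principle be integrated out, and then --- in order to recover the sharp constant $1$ in the numerator --- a self-contained potential-function argument for the expectation. A useful preliminary observation is that $0$ is automatically absorbing: applying the drift hypothesis at a step with $X_t=0$ gives $\expect{X_{t+1}\mid\mathcal F_t}\le 0$, and since $X_{t+1}\ge 0$ this forces $X_{t+1}=0$ almost surely. Hence $\{T>t\}=\{X_t>0\}=\{X_t\ge \smin\}$ exactly, where the last equality uses that the state space avoids the open interval $(0,\smin)$.

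For the tail bound, first I would iterate the drift hypothesis in expectation. Taking expectations in $\expect{X_{t+1}\mid\mathcal F_t}\le (1-\delta)X_t$ and using the tower property yields $\expect{X_t\mid\mathcal F_0}\le (1-\delta)^t X_0\le e^{-\delta t}X_0$. Combining this with the preliminary observation and Markov's inequality gives
\[
\prob{T>t}=\prob{X_t\ge\smin}\le \frac{\expect{X_t\mid\mathcal F_0}}{\smin}\le \frac{X_0}{\smin}\,e^{-\delta t}.
\]
Substituting $t=(\ln(X_0/\smin)+r)/\delta$ collapses the right-hand side to $e^{-r}$, which is exactly the claimed tail bound.

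For the expectation I would pass to a logarithmic potential. Define $g(0)=0$ and $g(x)=1+\ln(x/\smin)$ for $x\ge\smin$, and extend $g$ to a concave, nondecreasing function $\hat g$ on $[0,\infty)$ by linear interpolation on $[0,\smin]$, i.e.\ $\hat g(x)=x/\smin$ there; the two pieces match in value and slope at $\smin$, so $\hat g$ is concave and agrees with $g$ on the whole state space. Jensen's inequality (concavity) together with monotonicity and the drift hypothesis then gives, for any $t$,
\[
\expect{\hat g(X_{t+1})\mid\mathcal F_t}\le \hat g\big(\expect{X_{t+1}\mid\mathcal F_t}\big)\le \hat g\big((1-\delta)X_t\big).
\]
The remaining task is the deterministic inequality $\hat g((1-\delta)x)\le \hat g(x)-\delta$ for $x\ge\smin$; this splits into the case $(1-\delta)x\ge\smin$, where it reduces to $\ln(1-\delta)\le-\delta$, and the case $(1-\delta)x<\smin$, where writing $u=x/\smin$ it reduces to $(1-\delta)(u-1)\le\ln u$ on $1\le u<1/(1-\delta)$, an elementary one-variable estimate. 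Once $\expect{g(X_{t+1})\mid\mathcal F_t}\le g(X_t)-\delta$ is established on $\{X_t>0\}$, the process $g(X_{t\wedge T})+\delta\,(t\wedge T)$ is a nonnegative supermartingale; optional stopping and monotone convergence then yield $\delta\,\expect{T\mid\mathcal F_0}\le g(X_0)$, that is, $\expect{T\mid\mathcal F_0}\le (1+\ln(X_0/\smin))/\delta$.

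The main obstacle --- and the only place where real care is needed --- is obtaining the sharp constant $1$ in the numerator of the expectation bound. A direct summation $\expect{T}=\sum_{t\ge0}\prob{T>t}$ of the tail estimate replaces the clean additive $1/\delta$ by a term of order $1/(1-e^{-\delta})$ and does not reproduce the sharp bound, so the potential-function detour is worthwhile. Conceptually, the $+1$ originates in the jump of $g$ at the target: every positive state has $g\ge 1$ whereas $g(0)=0$, so this unit of potential accounts for the final transition into $\{0\}$, and the logarithmic shape is exactly what converts the multiplicative drift on $X_t$ into additive drift of size $\delta$ on $g(X_t)$. Verifying the concavity of the spliced function $\hat g$ and the two elementary inequalities above is routine, but it must be done to license the Jensen step at all positions, including those where $(1-\delta)X_t$ falls below $\smin$.
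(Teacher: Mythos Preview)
The paper does not give its own proof of this theorem; it is quoted as a known tool with citations to \cite{DoerrJW12algo,DoerrG13algo,LehreW21}, so there is no in-paper argument to compare against. That said, your proof is correct and is essentially the standard one found in those references: the tail bound via iterated drift plus Markov is exactly the argument of \cite{DoerrG13algo}, and the expectation bound via the concave potential $g(x)=1+\ln(x/\smin)$ (spliced with the linear piece on $[0,\smin]$ to license Jensen) together with the additive-drift/optional-stopping conclusion is the approach of \cite{DoerrJW12algo} and its refinements. Your remark that summing the tail bound loses the sharp constant and that the logarithmic potential is needed to recover the $+1$ is precisely the reason the two parts are proved separately in the literature.

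One minor caveat worth making explicit: your case analysis and the inequality $\ln(1-\delta)\le -\delta$ tacitly assume $0<\delta\le 1$. This is harmless, since for $\delta>1$ the drift hypothesis forces $X_t\equiv 0$ and the claim is vacuous, but it would be cleaner to state it.
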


\section{SA as Approximation Scheme for the Minimum Spanning Tree Problem}\label{sec:runtime}

In this section, we prove our main results on how well SA computes approximate solutions for the MST problem. These results easily imply improved bounds for the previously regarded special case of $(1+\eps)$-separated instances, see Section~\ref{sec:sa-eps-separated}.

\subsection{Main Results and Proof Outline}

As outlined above in the introduction, 
this paper revisits Wegener's~\citep{Wegener05} analysis of SA 
on the MST problem. Our main result is Theorem~\ref{theo:eps-from-ell} below, proving 
that SA is a polynomial-time approximation scheme 
for the MST problem as originally conjectured by 
Wegener. The statement of our main theorem 
describes the approximation quality and the required 
time to reach it as a function of the cooling 
factor, the desired 
success probability and of course the 
instance parameters. 
Theorem~\ref{theo:approx-depend-ell} takes the dual 
perspective of computing cooling schedules and 
running times that allow SA to find a $(1+\eps)$-approximation for a given $\eps$ 
with high probability. 

We now present the main theorem and a  
variant of it, 
corresponding to the two perspectives mentioned 
above for analyzing the approximation quality.

\begin{theorem} 
\label{theo:eps-from-ell}
Let $\delta < 1$.
Consider a run of SA with multiplicative cooling schedule with $\beta = 1 - 1/\ell$ for some $\ell = \omega(mn\ln (m/\delta))$ and $T_0 \ge \wmax$ on an instance of the MST problem. With probability at least $1-\delta$, at all times $t \ge (\ell/2) \ln \left(\frac{\minimuma T_0}{\wmin}\right)$ the current solution is a $(1+\kappa)$\nobreakdash-approximation, where
    \[1+\kappa \le (1+o(1))\frac{\ln(\ell/\delta)}{\ln (\ell) - \ln (mn\ln (m/\delta))} .\]
\end{theorem}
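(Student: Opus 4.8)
The plan is to track, over time, the "approximation defect" of the current solution and show it decreases in a multiplicative-drift fashion. Concretely, I would first argue that after a short initial phase the current solution $x^{(t)}$ is always a spanning tree: starting from the all-ones string, one-bit flips that remove edges from cycles are always accepted (they strictly improve $f$), disconnecting flips are never accepted, and once a tree is reached no improving one-bit flip exists. The length of this phase and the concurrent temperature decrease are what produce the $\ln(T_0/\wmin)$ and $\ln\ln$ terms in the time bound; this part is essentially bookkeeping and I expect it to parallel Wegener's and Neumann--Wegener's treatment. So the real work begins once $x^{(t)}$ is a spanning tree~$T$ and the temperature $T_t$ lies in the relevant window between $\wmax$ and (a small multiple of) $\wmin$.

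For a spanning tree $T$ that is not a $(1+\kappa)$-approximation, I would use the standard edge-swap characterization: there is a non-tree edge $e$ and an edge $f$ on the cycle $T+e$ with $w(f)>w(e)$, and performing the swap $T \to T+e-f$ decreases the weight. SA can realize such a swap only in two consecutive steps: first include $e$ (a worsening move, accepted with probability $e^{-(w(e)-w(f))/T_t}\cdot(\text{bit-choice prob})$ — wait, rather accepted with probability $\tfrac1m e^{-w(e)/T_t}$ since adding $e$ costs $w(e)$), then in the very next step remove some cycle edge $f'$ with $w(f')\ge w(e)$ reducing the weight, which is accepted for free with probability $\tfrac1m$ if that specific $f'$ is chosen. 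The key quantitative point is that when $T_t \ge w(e)$ the acceptance probability $e^{-w(e)/T_t}$ is at least a constant, so a beneficial swap decreasing the weight by at least a $1/\mathrm{poly}$ fraction happens with probability $\Omega(1/m^2)$ per two-step block — and crucially the weight decrease achievable is comparable to the total current excess weight divided by $n$ (amortized over the up to $n-1$ swaps needed), giving multiplicative drift with rate $\delta = \Omega(1/(m n))$ on the defect $X_t := w(x^{(t)}) - w(\mathrm{OPT})$, or rather on a suitably scaled/shifted potential.

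The quantitative heart, and the main obstacle, is converting this drift into the stated approximation ratio. The subtlety is that $X_t$ need not reach $0$: as $T_t$ cools, swaps that include a heavy edge $e$ become exponentially unlikely once $T_t \ll w(e)$, so the process "freezes" at some nonzero defect. I would therefore run the multiplicative drift theorem (Theorem~\ref{theo:multdrift-upper}) only over the time interval during which $T_t$ stays above the weight of the edges we still need to swap in, i.e. roughly $\Theta(\ell)$ iterations (since $\beta=1-1/\ell$ means the temperature changes by a constant factor over $\Theta(\ell)$ steps). Over such a window of length $\Theta(\ell)$ with drift rate $\Omega(1/(mn))$, the defect shrinks by a factor $\exp(-\Omega(\ell/(mn)))$; with the tail bound and $\ell=\omega(mn\ln(m/\delta))$, within the window the defect drops below any edge weight $w$ with $T_t \gtrsim w$, so all remaining beneficial swaps get performed before they freeze out — except for a controlled "last layer" of swaps whose gain is bounded by the temperature at freeze-time. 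Carefully chaining these windows across the whole cooling schedule (each window handling edges in a dyadic weight range) and summing the residual contributions is what yields $1+\kappa \le (1+o(1))\ln(\ell/\delta)/(\ln\ell - \ln(mn\ln(m/\delta)))$: the numerator $\ln(\ell/\delta)$ is the number of drift-halvings (times a constant) needed for the union bound over the $\Theta(\log(T_0/\wmin))$ windows and the per-window failure probability, and the denominator measures the "budget" $\ell/(mn\ln(m/\delta))$ of drift available per temperature scale. I would organize this as: (i) a lemma that once $T_t$ falls below $\wmin$ no edge ever enters the solution again (this is Lemma~\ref{lem:heavy-edges}, already promised), pinning down the final solution; (ii) a lemma giving the two-step swap probability and hence the drift; (iii) the window-by-window drift argument with the union bound; (iv) assembling the ratio. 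The trickiest estimates will be (iii)—in particular ensuring the drift lower bound $\delta X_t$ genuinely holds with the *current* (shrinking) temperature, which forces the amortization "excess weight spread over $\le n-1$ edges, each swappable while $T_t$ exceeds that edge's weight" to be done uniformly over the window.
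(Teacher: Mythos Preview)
Your proposal has a genuine gap: the multiplicative drift on the weight defect $X_t = w(x^{(t)}) - w(\mathrm{OPT})$ cannot be made uniform in the way you need. The amortization ``total excess $X_t$ spread over at most $n-1$ swaps'' is the Neumann--Wegener argument for the \oea with two-bit mutation, where every beneficial swap is available with the same probability $\Theta(1/m^2)$. Under SA with one-bit flips, the first step of a swap (adding $e$) is accepted with probability $e^{-w(e)/T_t}$, which depends on the ratio $w(e)/T_t$. Hence, at any fixed time, the swaps that contribute most to $X_t$ may require inserting edges far heavier than the current temperature and are effectively frozen, while the swaps that \emph{are} still live may account for only a negligible share of $X_t$. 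You do not get $\expect{X_t - X_{t+1}\mid \mathcal{F}_t} \ge \delta X_t$ with a $\delta$ independent of the weight profile, and your windowing idea does not repair this without some way of attributing portions of $X_t$ to individual weight scales --- which is precisely what you leave unspecified in step~(iii).

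The paper avoids this obstacle by \emph{not} drifting on weight at all. For each weight threshold~$w$, it classifies edges of weight $\ge w$ in the current solution as ``essential'' or ``non-essential'' relative to the connected components formed by edges of weight $\le w/(1+\kappa)$, and runs multiplicative drift on the \emph{count} of non-essential heavy edges (Lemma~\ref{lem:cheap-connected-components}); the acceptance probability for the inserted edge is then lower-bounded by $1/\gamma$ uniformly over the phase, because only edges of weight at most $w/(1+\kappa)$ ever need to be inserted. The conversion from this structural conclusion to a weight guarantee is done by a separate inductive bijection argument (Lemma~\ref{lem:bijection}), showing that the $k$-th heaviest edge of the final tree is within a factor $1+\kappa$ of the $k$-th heaviest MST edge. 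Finally, the stated bound on $1+\kappa$ comes from optimizing the free parameter~$\gamma$ via the \lambert function (Lemma~\ref{lem:choice-of-parameters}) and setting $a = \minimuma$; your sketch has no analogue of this optimization, and without it the numerator/denominator form of the bound does not simply fall out of a drift tail bound.
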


\begin{theorem}
\label{theo:approx-depend-ell}
Let $\delta=\omega(1/(mn\ln n))$ and $\delta < 1$, $\eps>0$.
Consider a run of SA with $\beta = 1 - 1/\ell$ for $\ell = (mn\ln(m/\delta))^{1+1/\eps}$ and $T_0 \ge \wmax$ on an instance of the MST problem. With probability at least~$1-\delta$, at all times $t \ge (\ell/2) \ln \left(\frac{\minimuma T_0}{\wmin}\right)$ the current solution is a $(1+o(1))(1+\eps)$\nobreakdash-approximation.
\end{theorem}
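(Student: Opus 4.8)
\emph{Proof plan.} The plan is to obtain Theorem~\ref{theo:approx-depend-ell} as a direct specialization of Theorem~\ref{theo:eps-from-ell}: feed the prescribed cooling parameter $\ell = (mn\ln(m/\delta))^{1+1/\eps}$ into the general bound and simplify. First I would check that the hypotheses of Theorem~\ref{theo:eps-from-ell} hold. Since $\delta<1$ and $m\ge n-1$ we have $mn\ln(m/\delta)\to\infty$, and since $\eps>0$ is fixed, $\ell=(mn\ln(m/\delta))^{1+1/\eps}$ grows strictly faster than $mn\ln(m/\delta)$, so the requirement $\ell=\omega(mn\ln(m/\delta))$ is met; the conditions $\beta=1-1/\ell$ and $T_0\ge\wmax$ are verbatim those of Theorem~\ref{theo:eps-from-ell}. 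Hence, with probability at least $1-\delta$, at all times $t\ge(\ell/2)\ln(\minimuma\,T_0/\wmin)$ the current solution is a $(1+\kappa)$-approximation, and the time threshold and the success probability transfer unchanged -- only the value of $1+\kappa$ has to be worked out.

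For the approximation ratio, write $L:=\ln(mn\ln(m/\delta))$. Then $\ln\ell=(1+1/\eps)L$, so $\ln\ell-\ln(mn\ln(m/\delta))=L/\eps$ and $\ln(\ell/\delta)=(1+1/\eps)L+\ln(1/\delta)$. Substituting into the bound of Theorem~\ref{theo:eps-from-ell},
\[
1+\kappa\;\le\;(1+o(1))\,\frac{(1+1/\eps)L+\ln(1/\delta)}{L/\eps}\;=\;(1+o(1))\left(1+\eps+\frac{\eps\ln(1/\delta)}{L}\right).
\]
It then remains to show that the correction term $\eps\ln(1/\delta)/L$ is absorbed into the $(1+o(1))$ factor. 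This is where the extra hypothesis $\delta=\omega(1/(mn\ln n))$ enters: it forces $1/\delta=o(mn\ln n)$, so $\ln(1/\delta)$ is only logarithmic in the instance size, whereas $L$ grows with the instance; tracking these orders carefully gives $\eps\ln(1/\delta)/L=o(1)$ in the regime considered, and hence $1+\kappa\le(1+o(1))(1+\eps)$ as claimed.

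As a byproduct the same estimates keep the stated threshold transparent: $\ell=(mn\ln n)^{1+1/\eps+o(1)}$ and $\minimuma$ is at most logarithmic in the instance size, so that $(\ell/2)\ln(\minimuma\,T_0/\wmin)=O\!\left((mn\ln n)^{1+1/\eps+o(1)}(\ln\ln n+\ln(T_0/\wmin))\right)$, recovering the polynomial running time advertised in the introduction. I do not expect a genuine obstacle in this step: all the substance -- the multiplicative drift argument via Theorem~\ref{theo:multdrift-upper}, the identification of the window of temperatures during which an exchange of a heavy edge can be carried out, and the union bound over the edges -- is already carried out in the proof of Theorem~\ref{theo:eps-from-ell}. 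The one point demanding a little care is the asymptotic bookkeeping around $\ln(1/\delta)$, which is precisely why $\delta=\omega(1/(mn\ln n))$ is assumed here but not in Theorem~\ref{theo:eps-from-ell}.
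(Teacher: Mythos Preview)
Your proposal is correct and follows the paper's approach exactly: invoke Theorem~\ref{theo:eps-from-ell} with the prescribed~$\ell$, substitute into the $1+\kappa$ bound to obtain $(1+o(1))\bigl((1+\eps)+\eps\ln(1/\delta)/L\bigr)$ with $L=\ln(mn\ln(m/\delta))$, and then use the hypothesis $\delta=\omega(1/(mn\ln n))$ to argue that the remainder term is absorbed into the $(1+o(1))$ factor. Your closing paragraph on the asymptotic size of the time threshold goes slightly beyond what the paper does in this proof---that computation is deferred to the proof of Corollary~\ref{cor:final-corollary}---but this is only extra, not different.
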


The last theorem is stated in somewhat 
weaker, but 
simpler form in the following corollary. 
In particular, it gives a concrete time bound 
until SA has computed a $(1+\eps)$\nobreakdash-approximation with 
probability at least~$1-\delta$, where 
$\delta$ and $\eps$ are chosen by the 
user.

\begin{corollary}
\label{cor:final-corollary}
Let $\eps>0$ be a constant and $\delta=\omega(1/(mn\ln n))$. Consider a run of SA with $\beta = 1 - 1/\ell$, where 
\[\ell = (mn\ln (m/\delta))^{1+ 1/\eps+o(1)},\] 
and $T_0 \ge \wmax$ on an instance of the MST problem. With probability at least $1-\delta$, at all times 
$t \ge T^*\coloneqq (\ell/2) \ln \left(\frac{\minimuma T_0}{\wmin}\right)$ the current solution is a $(1+\eps)$\nobreakdash-approximation. Moreover, 
\[T^*=O((mn\ln(n))^{1+1/\eps+o(1)}(\ln \ln n+\ln(T_0/\wmin))).\]
\end{corollary}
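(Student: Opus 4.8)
The plan is to obtain the corollary directly from Theorem~\ref{theo:approx-depend-ell}: the corollary merely repackages that result with a concrete~$\ell$, replacing the ``$(1+o(1))(1+\eps)$''\nobreakdash-guarantee by a clean ``$(1+\eps)$''\nobreakdash-guarantee and rewriting the time bound $(\ell/2)\ln(\minimuma\,T_0/\wmin)$ in closed form. Thus there are two things to do: remove the $(1+o(1))$ factor, and estimate~$T^*$.

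For the first, I would apply Theorem~\ref{theo:approx-depend-ell} not with the target constant~$\eps$ but with a slightly smaller parameter. Write $\psi(n)=o(1)$ for the fixed error term hidden in the ``$(1+o(1))$'' of Theorem~\ref{theo:approx-depend-ell}, pick any $\gamma(n)\to 0$ that dominates~$\psi$ (\eg $\gamma(n)=2(1+\eps)\psi(n)+1/\ln n$), and set $\eps'\coloneqq\eps-\gamma(n)$, which is positive and at least $\eps/2$ for all large~$n$. A one-line estimate gives $(1+\psi(n))(1+\eps')\le 1+\eps$ for all large~$n$, so with $\ell'\coloneqq(mn\ln(m/\delta))^{1+1/\eps'}$ the theorem yields, with probability at least $1-\delta$, a genuine $(1+\eps)$\nobreakdash-approximation at all times $t\ge(\ell'/2)\ln(\minimuma\,T_0/\wmin)$. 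Since $1/\eps'=1/(\eps-\gamma(n))=1/\eps+o(1)$, the quantity $\ell'$ is exactly of the announced form $\ell=(mn\ln(m/\delta))^{1+1/\eps+o(1)}$ and $T^*$ is exactly $(\ell/2)\ln(\minimuma\,T_0/\wmin)$.

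For the second, I would expand $T^*=(\ell/2)\bigl(\ln\ln(4(\ell-1)/\delta)+\ln(T_0/\wmin)\bigr)$. The assumption $\delta=\omega(1/(mn\ln n))$, \ie $1/\delta=o(mn\ln n)$, gives $\ln(1/\delta)=O(\ln(mn))$ and hence $\ln(m/\delta)=O(\ln(mn))$; plugging this into~$\ell$ and using that $\eps$ is constant yields $\ln\ell=O(\ln(mn))$, so $\ln(4(\ell-1)/\delta)=O(\ln(mn))$ and $\ln\ln(4(\ell-1)/\delta)=O(\ln\ln(mn))=O(\ln\ln m+\ln\ln n)$. As $\ln\ln m\le\ln m\le\ln(mn\ln n)=(mn\ln n)^{o(1)}$, multiplying by $\ell/2$ absorbs the $\ln\ln m$ term into the exponent; likewise $mn\ln(m/\delta)=O(mn\ln(mn))=(mn\ln n)^{1+o(1)}$ gives $\ell=O((mn\ln n)^{1+1/\eps+o(1)})$. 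Combining, $(\ell/2)\ln\ln(4(\ell-1)/\delta)=O((mn\ln n)^{1+1/\eps+o(1)}\ln\ln n)$ and $(\ell/2)\ln(T_0/\wmin)=O((mn\ln n)^{1+1/\eps+o(1)}\ln(T_0/\wmin))$, and their sum is the claimed bound on~$T^*$.

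The only delicate point, and the one I would watch, is the accounting of the $o(1)$ terms: one must verify that the slack $\gamma(n)$ used to kill the $(1+o(1))$ factor, together with every lower-order logarithmic factor swept up in the time estimate (the $\ln(1/\delta)$, $\ln(m/\delta)$, $\ln\ln m$, and $\ln(mn\ln n)$ contributions), all fold into a single $o(1)$ in the exponent of $mn\ln n$, uniformly over all admissible $\delta=\omega(1/(mn\ln n))$. This requires no idea beyond Theorem~\ref{theo:approx-depend-ell}; it is the standard cost of turning a parametric bound into a clean approximation-scheme statement.
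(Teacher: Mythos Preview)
Your proposal is correct and follows essentially the same route as the paper: invoke Theorem~\ref{theo:approx-depend-ell} with a slightly smaller parameter~$\eps'$ so that $(1+o(1))(1+\eps')\le 1+\eps$, then observe $1/\eps'=1/\eps+o(1)$ to match the stated form of~$\ell$, and finally simplify~$T^*$ using $1/\delta = (mn)^{O(1)}$ and $m\le n^2$. If anything, your treatment of~$\eps'$ as an $n$-dependent quantity is more careful than the paper's, which somewhat loosely takes $\eps'$ constant yet still writes $1/\eps'=1/\eps+o(1)$; your version makes that step honest without changing the argument.
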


The idea of the proof of all 
results formulated above is to consider phases in the optimization process, concentrating on different intervals for the edge weights, 
with the size and center of the intervals decreasing over time. In each  
 phase, the number of edges chosen from such an interval will achieve some close-to-optimal value with high probability. After the end of the phase, 
 the temperature of SA is so low that basically no more changes occur to the 
edges with weights in the interval.

In more detail, the proofs of Theorem~\ref{theo:eps-from-ell} and its variant 
are composed of several lemmas. We are now going to 
outline the main ideas of these lemmas and how 
they relate to each other in the roadmap of the final proof.

It is useful to formulate the main results in terms of a cooling factor $\beta=1-1/\ell$ for some $\ell>1$ since
$\ell$ carries the intuition of a ``half-life'' for 
the temperature; more precisely, after $\ell$ iterations of SA the temperature has decreased by the constant 
factor of $(1-1/\ell)^{\ell} \approx e^{-1}$.  
Lemma~\ref{lem:heavy-edges} is (on top of the usual 
graph parameters and the starting temperature) based on $\ell$, a weight~$w$ and some parameter~$a$. Intuitively, it describes a point of time~$t_w$ after which 
edges of 
weight at least~$w$ are no longer flipped in with 
high probability and can be ignored 
for the rest of the analysis due to an exponential 
decay in the probability of accepting search 
points of higher $f$-value. This probability depends on the 
parameter~$a$ which will be optimized later in the 
composition of the main proof.

While Lemma~\ref{lem:heavy-edges} will be used to show 
that edges above a certain weight are no longer included
in the current solution after the temperature has 
dropped sufficiently, Lemma~\ref{lem:cheap-connected-components}, which 
is the main lemma in our analysis, deals with the 
structure of the current solution after edges of a
certain weight~$w$ are no longer included. It considers   
connected components
that can be spanned by cheaper edges and 
states that these connected components are essentially connected 
in an optimal way in the whole solution up to 
multiplicative deviations of a factor $(1+\kappa)$ 
in the weights of the connecting edges. 
Lemma~\ref{lem:cheap-connected-components} uses 
 careful edge exchange arguments in its proof 
 and bounds the time to do these exchanges in  
 a multiplicative drift analysis. Moreover, it features 
 another parameter called~$\gamma$ that will be 
 optimized later along with the above-mentioned~$a$. 

Lemma~\ref{lem:bijection} puts together the previous 
two lemmas to consider the run of SA over 
up to $n$ phases depending on the weight spectrum of the graph 
until the temperature has dropped to a value being 
so small 
that no more changes are accepted. This will 
be the final solution considered in the main proof. 
Essentially, having listed the weights of an 
MST decreasingly, the lemma will match the weights 
of the final solution to the weights of the MST and 
show for each element in the list that the final 
solution matches the weight of the element up to 
a factor $1+\kappa$. Its proof uses a bijection 
argument proved by induction to apply Lemma~\ref{lem:cheap-connected-components} and 
is crucially different from Wegener's analysis.

The final lemma, Lemma~\ref{lem:choice-of-parameters}, finds choices for the parameter~$\gamma$ to minimize the bound~$1+\kappa$ on the approximation ratio. Its proof uses several results 
from calculus. Afterwards,  Theorem~\ref{theo:eps-from-ell} also chooses the 
parameter~$a$ carefully and arrives at the first 
statement on the approximation ratio depending 
on $\ell$, the desired success probability~$1-\delta$, 
and the graph parameters, only. The second main theorem, 
Theorem~\ref{theo:approx-depend-ell} then 
essentially translates parameters into each 
other to compute~$\ell$ and to express time bounds 
based on the desired~$\eps$. A weaker but 
simpler formulation of that theorem is finally stated
in Corollary~\ref{cor:final-corollary}.

\subsection{Detailed Technical Analysis}
In this subsection, we collect the technical lemmas  and theorems 
outlined above.

Let $a>1$ and $t_w$ be the earliest point of time when $T(t_w)\le w/a$.
In the following lemma, we state that the probability that SA accepts edges of weight~$w$ after~$t_w$ is exponentially small with respect to~$a$. It shows that after the temperature becomes less than~$w$, the probability of accepting such an edge is sharply decreasing.

\begin{lemma} \label{lem:heavy-edges}
Consider a run of SA with multiplicative cooling schedule with $\beta = 1-1/\ell$ and $T_0 \ge \wmax$ on an instance of the MST problem.
Let $\ell>2$, $1<a\le \ell-1$ and for any $w>0$, $t_w$ be the earliest point of time when $T(t_w)\le w/ a$. It holds that no new edge of weight at least~$w$ is included in the solutions after time~$t_w$ with probability at least
\[1-\frac{2(\ell-1)}{ae^a},\]
which is at least $1-\delta/2$ for $\delta<1$, if we set $a\ge \minimuma$.
\end{lemma}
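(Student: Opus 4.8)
The plan is to control the probability of the complementary event---that at some iteration $t \ge t_w$ the algorithm accepts a move that adds a previously absent edge of weight at least $w$---by a union bound over all iterations $t \ge t_w$, and then to estimate the resulting geometric-type series using the multiplicative cooling schedule.

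First I would observe that, since SA is initialised with a connected (feasible) solution and never accepts the infeasible value $\infty$, the current solution $x^{(t)}$ is connected at every step. Consequently, flipping a $0$-bit $i$ (that is, adding edge $e_i$) keeps the graph connected and increases the objective by exactly $w_i>0$; such a move is a strict worsening and is accepted only with probability $e^{-w_i/T_t}$, which is at most $e^{-w/T_t}$ whenever $w_i \ge w$. As any fixed bit is flipped with probability $1/m$ in an iteration, and there are at most $m$ edges of weight at least $w$, the probability that \emph{some} new edge of weight at least $w$ enters the solution in iteration $t$ is at most $m \cdot \tfrac1m \cdot e^{-w/T_t} = e^{-w/T_t}$. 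A union bound over all $t \ge t_w$ therefore bounds the failure probability by $\sum_{t \ge t_w} e^{-w/T_t}$.

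Next I would estimate this sum. Writing $t = t_w + s$ and using $T_{t_w+s} = T_{t_w}(1-1/\ell)^s$ together with $T_{t_w} \le w/a$ (the only property of $t_w$ needed; note that no \emph{lower} bound on $T_{t_w}$ is required, so the case $t_w = 0$ is harmless), we obtain $w/T_{t_w+s} \ge a\,(\ell/(\ell-1))^s \ge a\,(1 + s/(\ell-1))$ by Bernoulli's inequality. Hence
\[\sum_{t \ge t_w} e^{-w/T_t} \;\le\; e^{-a}\sum_{s\ge 0} e^{-as/(\ell-1)} \;=\; \frac{e^{-a}}{1 - e^{-a/(\ell-1)}}.\]
Applying $1 - e^{-x} \ge x/(1+x)$ with $x = a/(\ell-1)$, and using the hypothesis $a \le \ell-1$ (so that $x \le 1$ and $1/(1+x) \ge 1/2$), gives $1 - e^{-a/(\ell-1)} \ge a/(2(\ell-1))$, and therefore the bound $\tfrac{2(\ell-1)}{a e^a}$. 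For the final sentence, if $a \ge \minimuma$ then $e^a \ge 4(\ell-1)/\delta$, so $\tfrac{2(\ell-1)}{a e^a} \le \tfrac{\delta}{2a} < \tfrac{\delta}{2}$ since $a > 1$.

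I do not expect a genuine obstacle in this lemma; the argument is a union bound followed by a geometric-series estimate. The only points demanding a little care are ensuring the per-step acceptance bound $e^{-w/T_t}$ really dominates all heavy $0$-bit flips simultaneously, and choosing the chain of elementary inequalities (Bernoulli, then $1-e^{-x}\ge x/(1+x)$, then $a\le \ell-1$) so that the constant comes out exactly as $\tfrac{2(\ell-1)}{a e^a}$ rather than something marginally weaker.
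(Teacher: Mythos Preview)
Your proposal is correct and follows essentially the same approach as the paper: a union bound over edges and iterations, Bernoulli's inequality to lower-bound $w/T_{t_w+s}$, a geometric series, and an elementary estimate for $1-e^{-a/(\ell-1)}$. The only cosmetic difference is that the paper uses $e^{-x}\le 1-x/2$ for $0\le x\le 1$ in the last step, whereas you use $1-e^{-x}\ge x/(1+x)$ together with $a/(\ell-1)\le 1$; both yield the same constant $2(\ell-1)/(ae^{a})$.
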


\begin{proof}
Let $s$ be an edge of weight at least~$w$, which is not in the solution at the beginning of the step~$t_w$. Let $t\in \N_{\ge 0}$ and $E^{(t_w+t)}_s$ be the event of accepting the edge~$s$ at step~$t_w+t$. 
This event happens if the edge $s$ is flipped with probability~$1/m$ and the algorithm accepts this worse solution.
Thus
\begin{align*}
    \prob{E^{(t_w)}_s} &= m^{-1} \cdot \exp\left(\frac{-w}{T(t_w)}\right) \le  e^{-a}/m.
\end{align*}
For all integers $t\ge 0$, we have $T(t_w+t)=T(t_w)(1-1/\ell)^t$. Then
\begin{align*}
    \prob{E^{(t_w+t)}_s} &= m^{-1}  \exp\left(\frac{-w}{T(t_w)(1-1/\ell)^t}\right) \\
    & \le m^{-1}e^{-a(1+\frac 1{\ell-1})^{t}} \\
    &\le m^{-1}e^{-a(1+\frac t{\ell-1})},
\end{align*}
where we used the inequality $(1+x)^r\ge1+rx$ for $x>-1$ and $r\in \N_{\ge0}$.

Let $E^{\ge t_w}_{s}$ be the event of accepting the edge~$e$ of weight at least~$w$ after step~$t_w$ at least once. Then, using the geometric series sum formula, we get
\begin{align*} 
    \prob{E^{\ge t_w}_{s}} & \le \sum^{\infty}_{t=0}\prob{E^{(t_w+t)}_e} \le \sum^{\infty}_{t=0}  m^{-1}e^{-a(1+\frac t{\ell-1})}  \\ 
    &= m^{-1}\frac{e^{-a}}{1-e^{-a/(\ell-1)}} \le m^{-1}\frac{e^{-a}}{1-(1-\frac{ a}{2(\ell-1)})} \\
    &= m^{-1}\frac{2(\ell-1)}{ae^a} \label{eq:t_w},
\end{align*}
where we have $a\le \ell-1$ and  use the inequality $e^{-x}\le 1-x/2$ for $0\le x\le1$.

Since there are $m$ edges, with probability
$1-\frac{2(\ell-1)}{ae^{a}}$,
there is no inclusion of edges after their corresponding steps~$t_w$.

Moreover, if we set $a\ge \minimuma$, the probability is at least
\[1-\frac{2(\ell-1)}{\minimuma \cdot 4(\ell-1)/\delta}=1-\frac{\delta/2}{\minimuma}\ge 1-\delta/2,\]
where we have $\ell>2$ and $\delta<1$.
\end{proof}

In the following lemma, we consider a time interval of length $\gammatimebase+1$ starting from~$t_w$ (for fixed~$a$) and prove that at the end of this period,
there are no edges of weight at least~$w$ left that could be replaced by an edge of weight at most~$w/(1+\kappa)$, where $\kappa$ depends on the algorithm parameter $\ell$ and parameters $\gamma$ and $a$. We optimize these parameters later in this paper.

\begin{lemma} \label{lem:cheap-connected-components}
Let $\gamma>1$, $\delta<1$, $\ell>2$, $a>1$.
Consider a run of SA with multiplicative cooling schedule with $\beta = 1-1/\ell$ and $T_0 \ge \wmax$ on an instance of the MST problem.
    Let $t_w$ be the earliest point of time when $T(t_w)\le w/a$, and assume that
    no further edges of weight at least~$w$ are added to the solution from time~$t_w$.
    Let
    \[1+\kappa=\frac{a\exp\left(\gamma \frac{\timebase}{\ell-1}\right)}{\ln \gamma}.\]
    
    Let $n_w$ be the number of connected components in the subgraph using only edges with weight at most $w/(1+\kappa)$ in $G$. After time~$t_w+\gammatimebase$, the number of edges in the current solution with weight at least~$w$ is at most $n_w-1$ with probability at least~$1-\delta/(2m)$. 
\end{lemma}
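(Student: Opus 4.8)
The plan is to establish Lemma~\ref{lem:cheap-connected-components} via a potential-function / drift argument on the number of "bad" heavy edges in the current solution, where an edge is bad if it could be replaced by a cheaper edge in a cycle. By assumption, after time~$t_w$ no edge of weight at least~$w$ ever enters the solution again, so the count of edges with weight $\ge w$ is non-increasing from then on. The key structural observation is that if the current solution contains more than $n_w-1$ edges of weight $\ge w$, then there must exist such a heavy edge~$e$ lying on a cycle together with a cheaper edge~$e'$ of weight at most $w/(1+\kappa)$: otherwise the heavy edges would be forced to be a set of bridges in the subgraph spanned together with the light-edge components, and a counting argument (the light edges span exactly $n_w$ components, so only $n_w-1$ bridges among the heavy edges suffice and are achievable) would bound their number by $n_w-1$. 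So as long as we have not reached the target, an improving $2$-bit exchange is available: first include $e'$ (closing a cycle, worsening the $f$-value by $w(e')\le w/(1+\kappa)$), then remove $e$ (an improvement, hence accepted for free).

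The second step is to lower-bound the probability that such a concrete exchange is carried out in a single pair of consecutive steps. Including $e'$ requires flipping its bit (probability $1/m$) and then accepting the worsening, which happens with probability $\exp(-w(e')/T_t)$. Since $t \ge t_w$ we have $T_t \le T(t_w) \le w/a$, but $T_t$ also keeps shrinking; over the window of length roughly $\gammatimebase$ starting at $t_w$, the temperature drops by at most a factor $(1-1/\ell)^{-\gammatimebase}$, i.e.\ $T_t \ge (w/a)\beta^{\gammatimebase} \ge (w/(a))\exp(-\gamma\,\frac{\timebase}{\ell-1})$ using $\beta=1-1/\ell$ and $(1-1/\ell)^{-t}\le e^{t/(\ell-1)}$. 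Hence the acceptance probability of adding $e'$ is at least $\exp(-w(e')/T_t)\ge \exp(-\frac{w}{(1+\kappa)T_t})$, and plugging in the definition $1+\kappa = a\exp(\gamma\frac{\timebase}{\ell-1})/\ln\gamma$ one gets this is at least $(\ln\gamma)/(a\cdot\text{something})$ — more precisely the algebra is arranged so that $w/((1+\kappa)T_t) \le \ln\gamma$, giving acceptance probability $\ge 1/\gamma$. Then the very next step removes $e$ with probability at least $1/m$ (an improving move, accepted deterministically), provided the intermediate step did not itself change anything relevant; combining, in any block of two steps the number of heavy bad edges drops by at least one with probability $\Omega(1/(\gamma m))$ — but we should be slightly more careful and phrase the drift in terms of the expected decrease, not a two-step block, to feed the multiplicative drift theorem cleanly.

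For the drift computation I would let $X_t$ denote $\max\{0,\ (\text{number of edges of weight}\ge w\ \text{in } x^{(t)}) - (n_w-1)\}$, which lives in $\{0,1,\dots,m\}$, and argue that whenever $X_t\ge 1$ there are at least $X_t$ disjoint improving exchanges available (one can extract $X_t$ pairwise-independent heavy-edge/light-edge cycle pairs, or at least argue the number of available improving heavy edges is $\ge X_t$), so $\expect{X_t - X_{t+2}\mid \mathcal F_t} \ge c\,\frac{X_t}{\gamma m}$ for a constant $c$; equivalently a per-step multiplicative drift with rate $\delta_{\mathrm{drift}} = \Omega(1/(\gamma m))$. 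Wait — the factor $n$ in $\gammatimebase$ must come from somewhere, so in fact the per-step decrease should be $\Omega(X_t/(\gamma m n))$, the extra $n$ accounting for the fact that an improving heavy edge need not be directly swappable but may require up to $O(n)$ preparatory steps to route around the tree, or for a union bound over the at most $n$ components. Applying Theorem~\ref{theo:multdrift-upper} with $X_0\le m$, $\smin=1$, the hitting time of $X_t=0$ is at most $(\ln m + r)/\delta_{\mathrm{drift}} = O(\gamma m n(\ln m + r))$ with failure probability $e^{-r}$; choosing $r = \Theta(\ln(m^2/\delta))$ makes this $\le \gammatimebase$ (matching the constant $4.21$, which absorbs the $\ln$-factors and the block-of-two loss) with failure probability at most $\delta/(2m)$, as claimed.

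The main obstacle I anticipate is the structural/combinatorial step: making precise the claim that "$>n_w-1$ heavy edges $\Rightarrow$ an improving cheap exchange exists," and moreover that there are $\Omega(X_t)$ essentially-independent such exchanges so that the drift is genuinely multiplicative rather than additive. This requires a careful spanning-tree / cycle-space argument distinguishing the light-edge components from the heavy connecting edges, and one has to handle the subtlety that removing $e$ and adding $e'$ must keep the graph connected (so $e'$ must connect the two sides left by removing~$e$, which is exactly the cycle condition) and must not be blocked by the simultaneous constraint on other heavy edges. Bounding the "preparatory" steps (and hence pinning down whether the true drift rate carries a factor $n$) and verifying the exact constant $4.21$ in $\gammatimebase$ against the drift bound, the block-of-two factor, and the $e^{-x}\le 1-x/2$ type estimates is the remaining routine-but-delicate bookkeeping.
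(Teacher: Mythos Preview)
Your high-level plan matches the paper's: lower-bound the acceptance probability for a cheap edge by $1/\gamma$ via the temperature bound and the definition of $1+\kappa$, then run a multiplicative drift argument on the number of ``removable'' heavy edges, finishing with the tail bound of Theorem~\ref{theo:multdrift-upper}. However, two concrete pieces you flag as uncertain are in fact the load-bearing parts of the proof, and your guesses about them are off.

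First, the structural step. The paper does \emph{not} use your potential $\max\{0,(\#\text{heavy edges})-(n_w-1)\}$ and the claim that ``there are at least $X_t$ disjoint improving exchanges''. Instead it partitions the heavy edges in the current solution into three classes: (a) heavy edges already on a cycle in $G_x$; (b) heavy edges not on a cycle, but which lie on a cycle once some edge $e'\notin E(x)$ of weight $\le w/(1+\kappa)$ is added; (c) all others, called \emph{essential}. One then proves that essential edges stay essential forever (since adding any edge that would put them on a cycle would require weight $\ge w$, which is forbidden by assumption), and that there are at most $n_w-1$ essential edges. This last bound is shown by building an auxiliary graph $H$ on the $n_w$ light-edge components and arguing that essential edges induce a forest on $H$ with no self-loops; the ``no self-loops'' part (an essential edge cannot have both endpoints in the same light component) needs its own cut argument. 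The drift potential is then the number of \emph{non-essential} heavy edges, which is automatically non-increasing (such an edge can only disappear or become essential). Your hand-waved counting argument does not deliver this; in particular, your claim of $X_t$ pairwise-independent exchanges is neither needed nor obviously true.

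Second, the factor $n$. It does not come from ``preparatory steps'' or a union bound over components. The paper runs epochs of $2m$ steps: in the first $m$ steps one closes a cycle through a fixed non-essential heavy edge $e$ by inserting a suitable cheap $e'$ with probability at least $1-(1-1/(\gamma m))^m\ge 1/(2\gamma)$; in the second $m$ steps one hits \emph{some} edge of that cycle with probability $\ge 1-e^{-3}$, and conditionally hits the specific edge $e$ with probability at least $1/n$ (since the cycle has at most $n$ edges). This is where the $n$ enters, and it also explains the constant $4/(1-e^{-3})\approx 4.21$. Your two-consecutive-steps scheme does not control what happens between adding $e'$ and removing $e$, whereas the $2m$-step epoch makes the two sub-events independent enough to multiply the bounds.
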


\begin{proof} Let $\tbase=\timebase$.
     We analyze the steps $t_w,\dots,t_w+\gamma \tbase$. The temperature during this phase is at least 
     \[T(t_w)(1-1/\ell)^{\gamma \tbase}\ge T(t_w)e^{-(\gamma \tbase)/(\ell-1)},\]
     so
     the probability to accept a chosen edge with weight at most~$w/(1+\kappa)$ in one step is bounded from below by 
 \[\exp \left( \frac{-w/(1+\kappa)}{T(t_w)e^{-\gamma \tbase/(\ell-1)}}\right) = \exp\left(-\frac{ae^{\gamma \tbase/(\ell-1)}}{(1+\kappa)}\right)
 = 1/\gamma \]
 during this phase. By our assumption in the statement, we do not include edges of weight at least~$w$.

Let us partition the set of edges with weight at least~$w$ in the current solution~$x$, that is, the graph $G_x=(V,E(x))$, into three disjoint subsets. An edge $e=\{u,v\}$ with weight at least~$w$ has one of the following three properties,
\begin{enumerate}
    \item the edge~$e$ lies on a cycle in $G_x$;
    \item the edge~$e$ does not lie on a cycle, but there is at least one edge~$e'\in E\setminus E(x)$ with weight at most~$w/(1+\kappa)$ such that~$e$ lies on a cycle in the graph $(V,E(x) \cup \{e'\})$;
    \item the edge~$e$ has neither of the two properties. In this case, we call this edge essential for the current and forthcoming solutions.
\end{enumerate}

As long as an edge with weight at least $w$ is not essential, it can either be removed from the current solution or become an essential edge. When the edge disappears, since its weight is at least~$w$, it will not appear again. 

Also, when the edge becomes essential, it remains essential in the solution to the end, because in order to create a cycle containing this edge, an edge with weight at least~$w$ has to appear, which does not happen, and also removing this edge makes the graph unconnected. 

We claim that the number of essential edges does not exceed~$n_w-1$. In order to prove this, we define the graph $H=(V_H,E_H)$ as follows. There is a vertex in $V_H$ for each connected component of the induced subgraph on the edges of weight at most~$w/(1+\kappa)$ in~$G$, and there is an edge between two vertices $v_i,v_j \in V_H$ if there is an essential edge~$e=\{u,v\}$ in the solution that $u$ and $v$ belong to the corresponding connected components~$C_i$ and~$C_j$ respectively. Formally, let $C=\{C_1,\dots,C_{n_w}\}$ be the connected components of the induced subgraph on the edges of weight at most~$w/(1+\kappa)$. Then, $V_H=\{v_1,\dots,v_{n_w}\}$ and 
\[E_H=\left\{\{i,j\} \mid \exists \text{ essential } e=\{u,v\}, u\in C_i, v\in C_j \right\}.\] 

We claim that there is no essential edge with both endpoints in the same $C_i$. To prove this, we assume for contradiction that there is such an edge $e=\{u,v\}$. Then, since $e$ is essential, it cannot be on a cycle in the current solution. Let $S_u$ and $S_v$ denote the sets of vertices connected to $u$ and $v$ respectively using edges in the solution but~$e$. $S_u\cup S_v=V(G)$ because the solution is always connected. Since $e$ is essential, there is no edge with weight at most $w/(1+\kappa)$ in $G$ from $S_u$ to $S_v$ (see the property (2)), so there is no such cheap edges in $G$ from $S_u \cap C_i$ to $S_v\cap C_i$, which results in that there is a partition of vertices of $C_i$ that are disconnected in the subgraph  using only edges with weight at most $w/(1+\kappa)$ in~$G$, which contradicts the definition of $C_i$.
Also, $H$ has to be a forest since we also know that essential edges are not on a cycle. Therefore, since there are $n_w$ connected components, there are at most $n_w-1$ essential edges. 

Now, in the next paragraphs, we state the number of steps needed to remove edges with weight at least~$w$ or to make them essential.  We consider some epochs consisting of $2m$ iterations each and let $X_t$ be the random variable denoting the number of non-essential edges with weight at least $w$ whose exclusion is possible at epoch~$t$. We claim that 
\[\Delta_t(s) \coloneqq \expect{X_t-X_{t+1} \mid X_t=s}\ge s\cdot  (1-e^{-3})n^{-1}/(2\gamma).\]
If no cycle with a non-essential edge~$e=\{u,v\}$ with weight at least $w$ exists, the probability of creating such a cycle by adding the cheap edge considered in Case~2 between $S_u$ and $S_v$ in each step is at least $1/(\gamma m)$ and in $m$ steps, is at least~\[1-\left(1-\frac 1{\gamma m}\right)^m\ge 1-e^{-1/\gamma}\ge 1/(2\gamma),\]
where we have $1+x \le e^x$ for all $x\in R$ and the inequality $e^{-x}\le 1-x/2$ for $0\le x\le1$.

Then, after the cycle is created in the first $m$ iterations, or the cycle  already existed, the probability of the exclusion of such an edge in $m$ steps of the second half of the epoch is only $(1-e^{-3})n^{-1}$ because the probability of observing at least one edge from the cycle of length~$k$ in $m$ steps is $1-(1-k/m)^m\ge1-(1-3/m)^m\ge 1-e^{-3}$, and the probability that the edge selected is $e$ equals $1/n$. Altogether, the probability of excluding a non-essential edge with weight at least~$w$ is at least $(1-e^{-3})n^{-1}/(2\gamma)$, which results in decreasing $X_t$ by at least one because removing $e$ might also make some other edges essential. Since there are $s$ non-essential edges, we have $\Delta_t(s)\ge s\cdot (1-e^{-3})n^{-1}/(2\gamma)$. Since there can be at most~$m$ essential edges at the beginning, we have $X_0 \le m$.
Assume $Y$ denotes the number of epochs needed to have only essential edges with weight at least~$w$.  Using the upper tail bound of multiplicative drift in Theorem~\ref{theo:multdrift-upper}, we have
\begin{align*}
  \prob{Y>\frac{\ln (2m/\delta)+\ln X_0}{(1-e^{-3})n^{-1}/(2\gamma)}}\le e^{-\ln (2m/\delta)}=\delta/(2m).
\end{align*}

Since each epoch consists of $2m$ iterations, 
\[2m \cdot 2 (1-e^{-3})^{-1}n \gamma\ln (2m^2/\delta) \le \timebase\]
is sufficient to arrive at a solution where all edges of weight at least~$w$ are essential.
\end{proof}

SA does with high probability not accept an inclusion of any edge  using Lemma~\ref{lem:heavy-edges} when the temperature is colder than~$\wmin/a$ for some~$a$ that is 
still a parameter chosen later. This is the time from when the solution is invariant. Let $t_{\wmin}$ be the earliest time when $T(\wmin)\le \wmin/a$ and $\tend \coloneqq t_{\wmin}$. 

In the following lemma, we show that there is a bijective relation between the edges of the solution at time~$\tend$ and a MST such that the ratio between the weights of corresponding edges is less than~$(1+\kappa)$.

\newcommand{\Tmst}{\mathcal{T}}

\begin{lemma} \label{lem:bijection}
Let $\delta < 1$, $\gamma>1$, $\ell=\omega(1)$ and $a\ge \minimuma$. Let 
    \[1+\kappa=\frac{a\exp\left(\gamma \frac{\timebase}{\ell-1}\right)}{\ln \gamma}.\]
Consider a run of SA with multiplicative cooling schedule with $\beta = 1-1/\ell$ and $T_0 \ge \wmax$ on an instance of the MST problem. Assume that $\Tmst^*$ is a minimum spanning tree and $\Tmst'$ is the solution of SA at time~$\tend$ where $T(\tend)\le \wmin/a$. 

For an arbitrary spanning tree $\Tmst$, let $w_\Tmst=(w_\Tmst(1),\dots,w_\Tmst(n-1))$ be a decreasingly sorted list of the weights on its edges, 
\ie, $w_\Tmst(j) \ge w_\Tmst(i)$ for all $1 \le j \le i \le n-1$. With probability at least~$1-\delta$, we have
\begin{align*} \label{claim:bijection}
w_{\Tmst^*}(k) \le w_{\Tmst'}(k) < (1+\kappa) w_{\Tmst^*}(k)  \text{ for each }k\in [1..n-1].
\end{align*}
\end{lemma}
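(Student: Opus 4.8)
The plan is to prove the two-sided bound by an induction over $k$ that simultaneously builds a bijection between the edges of $\Tmst'$ and those of $\Tmst^*$, using Lemma~\ref{lem:cheap-connected-components} applied at a suitable sequence of weight thresholds. The lower bound $w_{\Tmst^*}(k) \le w_{\Tmst'}(k)$ is the easy direction: $\Tmst^*$ is a minimum spanning tree, and it is a standard matroid fact that for any spanning tree $\Tmst$, if we sort both weight lists decreasingly then $w_{\Tmst^*}(k) \le w_{\Tmst}(k)$ for all $k$ (equivalently, the sorted weight vector of the MST is lexicographically/coordinatewise minimal among spanning trees). So the real content is the upper bound $w_{\Tmst'}(k) < (1+\kappa) w_{\Tmst^*}(k)$, and this is where Lemma~\ref{lem:cheap-connected-components} enters.

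First I would fix the relevant weight values: let $v_1 > v_2 > \dots$ be the distinct edge weights appearing in $G$, and for each such value $w$ consider the time $t_w$ from Lemma~\ref{lem:heavy-edges}. By a union bound over the at most $m$ distinct weights (this is why the per-weight failure probabilities in Lemmas~\ref{lem:heavy-edges} and~\ref{lem:cheap-connected-components} are $\delta/2$ and $\delta/(2m)$), with probability at least $1-\delta$ the following holds for \emph{every} distinct weight $w$: after time $t_w$ no edge of weight $\ge w$ is ever flipped in (Lemma~\ref{lem:heavy-edges}), and after the slightly later time $t_w + \gammatimebase$ the current solution contains at most $n_w - 1$ edges of weight $\ge w$, where $n_w$ is the number of connected components of the subgraph of $G$ on edges of weight $\le w/(1+\kappa)$ (Lemma~\ref{lem:cheap-connected-components}). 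Crucially one must check that these "good" times $t_w + \gammatimebase$ all occur before $\tend$; this follows because the temperature schedule is monotone decreasing, the thresholds $w/a$ are ordered like the weights, and $\gammatimebase$ is only an additive $O(\ell)$-type shift which is absorbed since $\tend$ is defined by $T(\tend) \le \wmin/a$ and $\ell = \omega(1)$ (one should compare $\gammatimebase$ against the number of steps between consecutive temperature thresholds). Conditioning on this good event, $\Tmst'$ deterministically satisfies: for every distinct weight $w$, the number of edges of $\Tmst'$ of weight $\ge w$ is at most $n_w - 1$.

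Now I would convert this counting statement into the coordinatewise bound on sorted lists via the bijection. Fix $k$ and let $W := w_{\Tmst^*}(k)$, so $\Tmst^*$ has at least $k$ edges of weight $\ge W$ — equivalently, the subgraph of $G$ on edges of weight $< W$ has at least $k+1$ connected components. Set $w$ to be the smallest distinct weight with $w/(1+\kappa) \ge W$ (if no such weight exists the bound is vacuous since then every edge has weight $< (1+\kappa)W$ and we are done); then the subgraph on edges of weight $\le w/(1+\kappa)$ is a subgraph of the one on edges of weight $< W$ only when $w/(1+\kappa) < W$ — so I actually want $w$ to be the largest distinct weight with $w/(1+\kappa) < W$, giving $n_w \ge k+1$, hence $\Tmst'$ has at most $n_w - 1$... this is the direction that needs care, and getting the inequalities to line up is the crux. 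The clean way: choose the threshold so that $n_w \ge k+1$ and weight-$\ge w$ edges of $\Tmst'$ number $\le n_w - 1$ would be the wrong sign, so instead one argues that $\Tmst'$ has \emph{fewer than} $k$ edges of weight $\ge (1+\kappa)W$, which immediately gives $w_{\Tmst'}(k) < (1+\kappa)W = (1+\kappa)w_{\Tmst^*}(k)$. To see "fewer than $k$ edges of weight $\ge (1+\kappa)W$": pick $w$ the smallest distinct weight with $w \ge (1+\kappa)W$; then $w/(1+\kappa) \ge W$, so the subgraph on edges of weight $\le w/(1+\kappa)$ contains the subgraph on edges of weight $< W$... again the containment forces $n_w \le (\text{components on edges of weight} < W)$, which is $\ge k+1$ — still the wrong monotonicity. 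The resolution is that \emph{fewer components means the bound $n_w - 1$ is smaller}, so I need $n_w$ small, i.e. I should compare against edges of weight $\le w/(1+\kappa)$ being a \emph{large} set; hence take $w$ as small as possible subject to $w \ge (1+\kappa)W$ and bound $n_w \le n^{*}$ where $n^*$ is the number of components on edges of weight $< W$ using $w/(1+\kappa) \ge W > (\text{next smaller weight})$. I would push through exactly this chain of inequalities carefully.

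\textbf{Main obstacle.} The hard part will be the bookkeeping in the last paragraph: correctly choosing, for each rank $k$, the weight threshold $w$ at which to invoke Lemma~\ref{lem:cheap-connected-components} so that the component count $n_w$ relates to $\Tmst^*$'s edge weights in the right direction, and simultaneously verifying that the time $t_w + \gammatimebase$ at which the conclusion of that lemma holds is $\le \tend$ uniformly over all thresholds — including the extreme threshold near $\wmin$ that defines $\tend$ itself. Everything else (the matroid fact for the lower bound, the union bound over $\le m$ weights, the conditioning) is routine.
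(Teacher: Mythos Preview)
Your overall plan coincides with the paper's: union-bound Lemma~\ref{lem:heavy-edges} (once, contributing $\delta/2$) and Lemma~\ref{lem:cheap-connected-components} (at $\le m$ thresholds, contributing another $\delta/2$), obtain the lower bound $w_{\Tmst^*}(k)\le w_{\Tmst'}(k)$ from the matroid/Kruskal fact, and obtain the upper bound by invoking Lemma~\ref{lem:cheap-connected-components} at a threshold tied to $w_{\Tmst^*}(k)$.

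The obstacle you describe dissolves once you drop two self-imposed restrictions. First, Lemma~\ref{lem:cheap-connected-components} does not require its parameter $w$ to be an edge weight of~$G$; it holds for any $w>0$. Second, you should compare $n_w$ to the component count of the subgraph on edges of weight $\le W$, not $<W$. Concretely, the paper processes the indices in blocks of equal MST weight: for a block $i\le k\le j$ with common value $W=w_{\Tmst^*}(i)=\dots=w_{\Tmst^*}(j)$, invoke Lemma~\ref{lem:cheap-connected-components} with $w:=(1+\kappa)W$. Then $w/(1+\kappa)=W$ exactly, so $n_w$ is the number of components of $G$ on edges of weight $\le W$, which by the Kruskal property equals $n$ minus the number of MST edges of weight $\le W$, i.e.\ $n-(n-i)=i$. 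Hence $\Tmst'$ has at most $i-1\le k-1$ edges of weight $\ge(1+\kappa)W$, so $w_{\Tmst'}(k)<(1+\kappa)W=(1+\kappa)w_{\Tmst^*}(k)$. For the lower bound the paper uses the other cut: the subgraph on edges of weight $<W$ has $j+1$ components, so any spanning tree has at least $j\ge k$ edges of weight $\ge W$, whence $w_{\Tmst'}(k)\ge W$. The thresholds actually used are the at most $n-1\le m$ values $(1+\kappa)w_{\Tmst^*}(i)$, so the $\delta/(2m)$ per invocation in Lemma~\ref{lem:cheap-connected-components} is enough for the union bound.

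Your timing concern (that $t_w+\gammatimebase\le\tend$ for every threshold, including the extreme one near $(1+\kappa)\wmin$) is legitimate; the paper leaves this implicit. It does go through for the parameter choices that matter: the gap $\tend-t_w$ for $w=(1+\kappa)\wmin$ is at least $(\ell-1)\ln(1+\kappa)$ steps, and from the definition of $\kappa$ one has $\ln(1+\kappa)=\ln(a/\ln\gamma)+\gamma\tbase/(\ell-1)$. For the optimized $\gamma$ of Lemma~\ref{lem:choice-of-parameters} one has $\ln\gamma=\W((\ell-1)/\tbase)\le\ln(\ell-1)\le a$, so $\ln(a/\ln\gamma)\ge 0$ and the gap is at least $\gamma\tbase$ as required.
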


\begin{proof}
We recall that $t_w$ is the earliest point of time when $T(t_w)\le w/a$. With probability~$1-\delta/2$, edges of weight~$w$ are not included after their corresponding times~$t_w$ via Lemma~\ref{lem:heavy-edges}. Thus conditional on this event, we can use Lemma~\ref{lem:cheap-connected-components} stating that with probability at least~$1-\delta/(2m)$, the number of edges with weight at least $w$ is at most~$n_w-1$. This condition must hold for at most $m$ distinct values, happening with probability at least~$1-\delta/2$ according to a union bound. Altogether, since the event in Lemma~\ref{lem:heavy-edges} must happen with probability $1-\delta/2$ and the condition in Lemma~\ref{lem:cheap-connected-components} must hold for all weights, with probability at least~$1-\delta$, the statement in Lemma~\ref{lem:cheap-connected-components} is valid for all possible weights.

We use induction on the index~$k$. The case $k=0$ is trivial as the basic step. Regarding the inductive step, assume that for all $0\le k \le i-1$, the inequality is valid. If $i=n$, the claim is proved. Otherwise, let $w_{\Tmst^*}(i)$ be the next unique largest weight and $j$ be the largest index that $w_{\Tmst^*}(j)=w_{\Tmst^*}(i)$. In fact, we have \[w_{\Tmst^*}(i-1) < w_{\Tmst^*}(i)=\dots=w_{\Tmst^*}(j) < w_{\Tmst^*}(j+1).\]
There are exactly $j-i+1$ edges with weight $w_{\Tmst^*}(i)$ in the minimum spanning tree $\Tmst^*$.  The number of connected components in $G$ using only edges at most $w_{\Tmst^*}(i)$ is $i$ since they are connected using~$i-1$ edges in $\Tmst^*$.
Using Lemma~\ref{lem:cheap-connected-components} with $w=(1+\kappa)w_{\Tmst^*}(i)$ and considering $n_w=i$, there are at most $i-1$ edges with weight at least $(1+\kappa)w_{\Tmst^*}(i)$ in $\Tmst'$, which means that the rest of the weight values in $\Tmst'$ are less than $(1+\kappa)w_{\Tmst^*}(i)$.
Since we know that the graph cannot be connected using less than $j$ edges with weight at least~$w_{\Tmst^*}(i)$, we can conclude that there are at least $j$ edges with weight between $w_{\Tmst^*}(i)$ and $(1+\kappa)w_{\Tmst^*}(i)$. Therefore, for $i\le k \le j$, the inequality suggested above holds.
\end{proof}

With the above lemmas at hand, we can prove the first theorem. Given $\ell$, Theorem~\ref{theo:sa-approximation} states the approximation ratio that the algorithm with cooling schedule $\beta=1-1/\ell$ can obtain.

\begin{theorem} \label{theo:sa-approximation}
Let $\delta < 1$, $\gamma>1$ and $\ell=\omega(1)$.
Consider a run of SA with multiplicative cooling schedule with $\beta = 1-1/\ell$ and $T_0 \ge \wmax$ on an instance of the MST problem. For $a\ge \minimuma$, with probability at least~$1-\delta$, at all times $t \ge (\ell/2) \ln \left(a T_0/\wmin\right)$ the current solution is a $(1+\kappa)$\nobreakdash-approximation where
    \[1+\kappa=\frac{a\exp\left(\gamma \frac{\timebase}{\ell-1}\right)}{\ln \gamma}.\]
\end{theorem}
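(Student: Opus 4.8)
The plan is to assemble Theorem~\ref{theo:sa-approximation} directly from Lemma~\ref{lem:bijection}, together with the observation that after time $\tend$ the current solution never changes again. First I would argue that at time $\tend$ the current solution $\Tmst'$ is a spanning tree: by construction SA starts from the all-ones string, never accepts a disconnected graph (infinite $f$-value), and once the temperature has dropped below $\wmin/a$ Lemma~\ref{lem:heavy-edges} (applied with $w=\wmin$, noting $a\ge\minimuma$) guarantees, with probability at least $1-\delta/2$, that no edge is ever included again; hence from $\tend$ on only edge removals can occur, and since SA never makes the solution disconnected, the solution at $\tend$ already describes a spanning tree (it has $n-1$ edges) and stays unchanged for all $t\ge\tend$. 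So it suffices to bound the weight of $\Tmst'$.

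Next I would apply Lemma~\ref{lem:bijection} with the same parameters $\delta,\gamma,\ell,a$ and the stated $1+\kappa$: with probability at least $1-\delta$ we have, for every $k\in[1..n-1]$, the bound $w_{\Tmst'}(k) < (1+\kappa)\,w_{\Tmst^*}(k)$, where $w_{\Tmst^*}$ and $w_{\Tmst'}$ are the decreasingly sorted weight lists of a minimum spanning tree and of $\Tmst'$, respectively. Summing over $k$ gives
\[
w(\Tmst') = \sum_{k=1}^{n-1} w_{\Tmst'}(k) < (1+\kappa)\sum_{k=1}^{n-1} w_{\Tmst^*}(k) = (1+\kappa)\,w(\Tmst^*),
\]
so $\Tmst'$, and therefore the current solution at every time $t\ge\tend$, is a $(1+\kappa)$-approximation, with the $\kappa$ as in the statement. (Here I would note that Lemma~\ref{lem:bijection} already absorbs the failure probability of Lemma~\ref{lem:heavy-edges}, so no extra union bound is needed; the total failure probability is $\delta$.)

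The one remaining point is the time bound: I must show $\tend \le (\ell/2)\ln(aT_0/\wmin)$. By definition $\tend=t_{\wmin}$ is the first time the temperature is at most $\wmin/a$, i.e.\ the least $t$ with $T_0(1-1/\ell)^t \le \wmin/a$. Solving, $t \ge \ln(aT_0/\wmin)/\ln\bigl(1/(1-1/\ell)\bigr)$, and using $\ln(1/(1-1/\ell)) \ge 1/\ell \ge 2/\ell \cdot (1/2)$ — more carefully, $\ln(1/(1-1/\ell)) = -\ln(1-1/\ell) \ge 1/\ell$, which already gives $t_{\wmin} \le \ell\ln(aT_0/\wmin)$; to get the factor $\ell/2$ one uses that for $\ell=\omega(1)$ one in fact has $-\ln(1-1/\ell) \ge 2/\ell$ is false, so instead I would observe that $(1-1/\ell)^{\ell/2}\to e^{-1/2}<1$ is not small enough, and the cleaner route is simply $-\ln(1-1/\ell)\ge \tfrac1\ell$, yielding $t_{\wmin}\le \ell\ln(aT_0/\wmin)$; the stated $(\ell/2)$ then must come from the fact that one only needs $T(t)\le \wmin/a$ and $\ln(1-1/\ell)^{-1}\ge 2/\ell$ holds once $1/\ell\le 1-e^{-2/\ell}$, which is true for all $\ell\ge 1$ — I would verify this elementary inequality. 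In any case, the main (and only nontrivial) obstacle here is bookkeeping: confirming the exact constant in the time bound and confirming that the failure probabilities of the constituent lemmas telescope to exactly $\delta$ rather than something larger; the structural content is entirely carried by Lemmas~\ref{lem:heavy-edges}–\ref{lem:bijection}.
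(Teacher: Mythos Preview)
Your approach is exactly the paper's: invoke Lemma~\ref{lem:bijection}, sum the inequalities $w_{\Tmst'}(k)<(1+\kappa)w_{\Tmst^*}(k)$ over $k$, and then bound $\tend$ in terms of~$\ell$. Two remarks are in order.

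First, your spanning-tree paragraph contains a non-sequitur: from ``after $\tend$ only edge removals can occur'' and ``SA never disconnects'' you cannot conclude that the solution \emph{at} $\tend$ already has exactly $n-1$ edges---a cycle edge could still be removed later. This extra argument is not needed anyway: Lemma~\ref{lem:bijection} already treats $\Tmst'$ as a spanning tree (this is implicit via Lemma~\ref{lem:cheap-connected-components}), and for the theorem it suffices that after $\tend$ only removals happen, so the $f$-value can only decrease and the approximation guarantee is preserved.

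Second, your unease about the factor $\ell/2$ is well founded. The paper obtains it from the inequality $1-x/2\ge e^{-x}$ with $x=2/\ell$, i.e.\ $1-1/\ell\ge e^{-2/\ell}$, hence $-\ln(1-1/\ell)\le 2/\ell$. But plugging this into $\tend=\ln(aT_0/\wmin)/\bigl(-\ln(1-1/\ell)\bigr)$ gives $\tend\ge(\ell/2)\ln(aT_0/\wmin)$, the wrong direction. Your first instinct was the right one: from $-\ln(1-1/\ell)\ge 1/\ell$ one gets $\tend\le \ell\ln(aT_0/\wmin)$, and nothing sharper than a constant factor near~$1$ is available since $-\ln(1-1/\ell)=1/\ell+O(1/\ell^2)$. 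The stated constant $\tfrac12$ appears to be a slip in the paper; it is immaterial for all downstream asymptotic statements, so your bookkeeping with~$\ell$ in place of~$\ell/2$ completes the argument.
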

\begin{proof}
We consider the time $\tend$ when $T(\tend) \le \wmin/a$ and show the approximation result for the current solution of SA at that time. 
Concretely, assume that $\Tmst^*$ is a minimum spanning tree and $\Tmst'$ is the solution of the algorithm at time~$t_{end}$. 
Assume $w(\Tmst)$ is the total weight of edges in the tree~$\Tmst$. 
Using Lemma~\ref{lem:bijection}, with probability~$1-\delta$, we have
$w_{\Tmst'} < (1+\kappa) w_{\Tmst^*}(k)$ for each $k\in [1..n-1]$. Thus, we have
\[w(\Tmst')=\sum_{i=1}^{n-1} w_{\Tmst'}(i) < \sum_{i=1}^{n-1}w_{\Tmst^*}(i) (1+\kappa) = (1+\kappa)w(\Tmst^*).\]

To complete the proof, we only have to find the time $\tend$ from when the temperature is less than $\wmin/a$, so after that, no edges are included anymore via Lemma~\ref{lem:heavy-edges}. Then $\tend$ satisfies
\[T_0(1-1/\ell)^{\tend} = \frac \wmin a.\]
Then
\[\tend = \log_{1-1/\ell}\left((\wmin/a)/T_0\right)=\frac{\ln (\wmin/(aT_0)) }{\ln (1-1/\ell)}.\]
Using the inequality $1-x/2 \ge e^{-x}$  for $0 \le x \le 1$ with $x=2/\ell$, we can bound $\tend$ from above by
\[\tend \le  \frac{\ln (\wmin/(aT_0)) }{-2/\ell} = (\ell/2) \ln \left(\frac{a T_0}{\wmin}\right). \qedhere\]
\end{proof}

The formula for $\kappa$, which we obtained in Theorem~\ref{theo:sa-approximation}, holds for all $\gamma>1$. In the following lemma, we suggest a value for $\gamma$, leading to the smallest value for $1+\kappa$. With the help of that, we give also some bounds on $1+\kappa$ considering different cases for $\ell$.

\begin{lemma} \label{lem:choice-of-parameters}
Let $\kappa$ be defined as in Theorem~\ref{theo:sa-approximation} and $\tbase\coloneqq\timebase$. Then the minimum value of $\kappa$ is achieved by setting $\gamma=\exp\left(\W\left(\frac{\ell-1}{\tbase}\right)\right)$, where $\W$ is the \lambert function.
Moreover, if $\ell < e\tbase+1 $, $1+\kappa \ge e^{(1/e)-1} a$. Otherwise, if $\ell \ge e\tbase+1$,
    \[1+\kappa \le a\frac{\exp\left( \left(\ln \left( \frac{\ell-1}{\tbase} \right)\right)^{\frac{e}{e-1}\ln^{-1} \left( \frac{\ell-1}{\tbase} \right)-1} \right)}{\ln \left( \frac{\ell-1}{\tbase} \right)-\ln \ln \left( \frac{\ell-1}{\tbase} \right)} .\]
   For $\ell=\omega(\tbase)$, the last fraction is  $(1+o(1))\frac{a}{\ln \left( \ell-1\right) - \ln \left( \tbase \right)} $. \\
\end{lemma}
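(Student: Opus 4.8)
The plan is to regard $1+\kappa$, with $a$, $\ell$ and $\tbase$ fixed, as a function of the single free parameter $\gamma>1$ and to minimise it by elementary calculus. Writing $c\coloneqq\tbase/(\ell-1)$ and $z\coloneqq(\ell-1)/\tbase=1/c$, we have $1+\kappa=a\,e^{c\gamma}/\ln\gamma$, so, setting $F(\gamma)\coloneqq c\gamma-\ln\ln\gamma$, we have $\ln((1+\kappa)/a)=F(\gamma)$. First I would show that $F'(\gamma)=c-1/(\gamma\ln\gamma)$ is strictly increasing on $(1,\infty)$ (both $\gamma$ and $\ln\gamma$ increase, so $\gamma\ln\gamma$ does too), is negative as $\gamma\to1^+$ and tends to $c>0$ as $\gamma\to\infty$; since $F$ itself diverges at both ends of $(1,\infty)$, it has a unique global minimiser, characterised by $\gamma\ln\gamma=1/c=z$. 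Substituting $v\coloneqq\ln\gamma$ turns this into $v e^{v}=z$, i.e.\ $v=\W(z)$ and $\gamma=e^{\W(z)}$ (which indeed exceeds $1$ since $\W(z)>0$); this is the first assertion of the lemma.

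Next I would record the identity obtained by plugging the optimal $\gamma$ back in. From $\gamma\ln\gamma=z$ we get $c\gamma=\gamma/(\gamma\ln\gamma)=1/\ln\gamma=1/\W(z)$, so
\[
1+\kappa \;=\; a\,\frac{e^{1/\W(z)}}{\W(z)} \;=\; a\,g(\W(z)),\qquad g(v)\coloneqq\frac{e^{1/v}}{v}.
\]
Since $\ln g(v)=1/v-\ln v$ has derivative $-1/v^2-1/v<0$, the function $g$ is strictly decreasing on $(0,\infty)$; equivalently $w\mapsto we^{w}$ is strictly increasing on $[0,\infty)$. For the case $\ell<e\tbase+1$, i.e.\ $z<e$, monotonicity of $\W$ together with $\W(e)=1$ gives $\W(z)<1$, hence $1/\W(z)>1\ge 1/e$, and therefore $1+\kappa=a\,(1/\W(z))\,e^{1/\W(z)}\ge a\cdot\tfrac1e e^{1/e}=a\,e^{1/e-1}$ (one even gets $\ge ae$, but the stated bound $ae^{1/e-1}$ suffices).

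For the case $\ell\ge e\tbase+1$, i.e.\ $z\ge e$, the monotonicity of $g$ means I only need a sufficiently strong \emph{lower} bound on $\W(z)$. The plan is to use $\W(z)\ge\ln z-\ln\ln z$: this follows because $v\mapsto ve^{v}$ is increasing on $[-1,\infty)$, because $\ln z-\ln\ln z\ge1>-1$ for $z\ge e$, and because $(\ln z-\ln\ln z)e^{\ln z-\ln\ln z}=(\ln z-\ln\ln z)\cdot z/\ln z=z(1-\ln\ln z/\ln z)\le z=\W(z)e^{\W(z)}$. Consequently $1+\kappa\le a\,e^{1/(\ln z-\ln\ln z)}/(\ln z-\ln\ln z)$, which already has the stated denominator; it then remains to massage the numerator, i.e.\ to prove $1/(\ln z-\ln\ln z)\le(\ln z)^{\frac{e}{(e-1)\ln z}-1}$. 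Taking logarithms and setting $x\coloneqq\ln\ln z/\ln z$, this is equivalent to $-\ln(1-x)\le\frac{e}{e-1}x$, which holds for every $x\in[0,1/e]$ because $-\ln(1-x)$ is convex, hence bounded on $[0,1/e]$ by the chord through $(0,0)$ and $(1/e,-\ln(1-1/e))$, whose slope $-e\ln(1-1/e)=e(1-\ln(e-1))$ is at most $\frac{e}{e-1}$ (equivalently $\ln(e-1)\ge\frac{e-2}{e-1}$, a one-line numeric check); and $x=\ln\ln z/\ln z\le 1/e$ because $(\ln t)/t\le 1/e$ for all $t>0$, applied with $t=\ln z\ge 1$.

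Finally, for the asymptotic claim with $\ell=\omega(\tbase)$, hence $z\to\infty$: the exponent in the numerator is $(\ln z)^{\frac{e}{(e-1)\ln z}-1}=\tfrac1{\ln z}\exp\!\big(\tfrac{e}{e-1}\cdot\tfrac{\ln\ln z}{\ln z}\big)\to0$, so the numerator equals $1+o(1)$; and $\ln z-\ln\ln z=(1-o(1))\ln z$ with $\ln z=\ln(\ell-1)-\ln\tbase\to\infty$, so the whole fraction is $(1+o(1))a/(\ln(\ell-1)-\ln\tbase)$, as required. I expect the main obstacle to be the numerator normalisation in the $z\ge e$ case: obtaining exactly the exponent $\frac{e}{(e-1)\ln z}-1$ hinges on the slightly delicate pair of inequalities $-\ln(1-x)\le\frac{e}{e-1}x$ on $[0,1/e]$ and $\ln\ln z/\ln z\le 1/e$; everything else is a routine critical-point computation combined with standard Lambert-$W$ estimates and limits.
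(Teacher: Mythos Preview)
Your argument is correct. The overall structure---reduce to a one-variable minimisation, locate the critical point via $\gamma\ln\gamma=z$ and hence $\gamma=e^{\W(z)}$, then estimate the optimal value---matches the paper. The treatment of the case $z\ge e$, however, proceeds differently.

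The paper does \emph{not} collapse the optimum to $a\,e^{1/\W(b)}/\W(b)$ when bounding from above; instead it keeps the form $a\,\exp(e^{\W(b)}/b)/\W(b)$ and invokes the Hoorfar--Hassani double inequality $\ln b-\ln\ln b+\tfrac{\ln\ln b}{2\ln b}\le \W(b)\le \ln b-\ln\ln b+\tfrac{e}{e-1}\tfrac{\ln\ln b}{\ln b}$: the upper bound is plugged into $e^{\W(b)}/b$ to obtain exactly the exponent $(\ln b)^{\frac{e}{(e-1)\ln b}-1}$, and the lower bound handles the denominator. You instead exploit the identity $e^{\W(z)}/z=1/\W(z)$, observe that $g(v)=e^{1/v}/v$ is decreasing, and therefore need only a \emph{lower} bound $\W(z)\ge \ln z-\ln\ln z$, which you derive from first principles; the price is the additional step of showing $1/(\ln z-\ln\ln z)\le(\ln z)^{\frac{e}{(e-1)\ln z}-1}$ via the convexity bound $-\ln(1-x)\le\frac{e}{e-1}x$ on $[0,1/e]$. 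Your route is fully self-contained (no external reference for the Lambert-$W$ estimates) at the cost of that extra elementary inequality; the paper's route reaches the stated numerator in one substitution but relies on the cited sharp upper bound for~$\W$. For $z<e$ and for the asymptotic statement the two arguments essentially coincide, with your observation $\W(z)<1$ giving the slightly stronger $1+\kappa\ge ae$ en route to the claimed $ae^{1/e-1}$.
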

\begin{proof}
From the definition of $\kappa$ in Theorem~\ref{theo:sa-approximation}, for $\gamma>1$, we have
    \begin{align} \label{eq:eps-with-b}
    1+\kappa=a\frac{e^{\gamma/b}}{\ln \gamma},
\end{align}
    where $b\coloneqq \frac{\ell-1}{\tbase}$.

Let $f(x)=e^{x/b}/\ln x$ for $x>1$. Then its derivative is $f'(x)=\frac{e^{x/b}}{b\ln\left(x\right)}-\frac{e^{x/b}}{x\ln^2\left(x\right)}$.
For~$x>1$, we have the only root $x=e^{\W(b)}$, where $\W$ is the \lambert function. Therefore, Equation~\eqref{eq:eps-with-b} with $\gamma=e^{\W(b)}$ gives us the minimum value for $(1+\kappa)$ and equals
\begin{align} \label{eq:eps-with-lambert}
    a \frac{e^{e^{\W(b)}/b}}{\W(b)}.
\end{align}

Now, we aim at finding some bounds on $1+\kappa$. We analyze Equation~\eqref{eq:eps-with-lambert} for two cases of $b$.

For $b \ge e$, using the inequality
        \[\ln b - \ln \ln b + \frac{\ln \ln b}{2\ln b} \le  \W(b) \le \ln b - \ln \ln b + \frac{e}{e-1}\frac{\ln \ln b}{\ln b},\]
        from~\cite{hoorfar2008inequalities}, we get
     \begin{align*}
     a \frac{e^{e^{\W(b)}/b}}{\W(b)} & \le  a \frac{\exp\left(b^{-1}e^{\ln (b)} e^{-\ln \ln b} e^{\frac{e}{e-1}\frac{\ln \ln b}{\ln b}}\right)}{\ln (b) - \ln \ln (b)}   \\
     & = a \frac{\exp\left(e^{-\ln \ln b} e^{\frac{e}{e-1}\frac{\ln \ln b}{\ln b}}\right)}{\ln (b) - \ln \ln (b)} \\
     & = a \frac{\exp\left( (\ln b)^{-1+\frac{e}{(e-1)\ln b}} \right)}{\ln (b) - \ln \ln (b)}.
     \end{align*}
     
     For $b=\omega(1)$, the last expression equals $\frac{a(1+o(1))}{\ln b - \ln \ln b}=(1+o(1))\frac{a}{\ln b}$ since 
     \[(\ln b)^{-1+\frac{e}{(e-1)\ln b}}=\frac{e^{\frac{e\ln \ln b}{(e-1)\ln b}}}{\ln b}\le \frac{1}{\ln b}=o(1).\]
 
 Regarding the case~$b < e $, using the definition $\W(x)e^{\W(x)}=x$, we have $e^{\W(x)}=\frac{x}{\W(x)}$. By applying these inequalities on Equation~\eqref{eq:eps-with-lambert}, we obtain
  \[ a \frac{e^{e^{\W(b)}/b}}{\W(b)} = a\frac{e^{\left(\frac{b}{bW(b)}\right)}}{W(b)} = a \frac{e^{1/W(b)}}{W(b)}. \]
  From the definition again, we have $\W(b)e^{\W(b)}=b$. Since for $x\ge0$, we have $e^x\ge 1$, we can conclude $\W(b)\le b$, resulting in $\W(b)< e$. Thus the last expression can be bounded from below by
  \[a\frac{e^{1/e}}{e}=e^{(1/e)-1}a. \qedhere\]
\end{proof}

Finally, we give the proofs of the two 
main theorems in this paper.

\begin{proofof}{Theorem~\ref{theo:eps-from-ell}}
Using Theorem~\ref{theo:sa-approximation}, we have
\[1+\kappa=\frac{a\exp\left(\gamma \frac{\tbase}{\ell-1}\right)}{\ln \gamma}.\]
By setting $a=\minimuma$ and using the upper bound on $(1+\kappa)$  obtained in Lemma~\ref{lem:choice-of-parameters} for $\ell=\omega(\tbase)=\omega(mn\ln (m/\delta))$, we get
\begin{align*}
    1+\kappa &\le (1+o(1)) \frac{\minimuma}{\ln (\ell-1) - \ln (\timebase)}  \\
    & = (1+o(1)) \cdot (1+o(1))\frac{\ln((\ell-1)/\delta)}{\ln (\ell) - \ln (mn\ln (m/\delta))} \\
    & \le (1+o(1)) \frac{\ln(\ell/\delta)}{\ln (\ell) - \ln (mn\ln (m/\delta))}. \qedhere
\end{align*}
\end{proofof}

In Theorem~\ref{theo:eps-from-ell}, we only consider the case $\ell=\omega(\tbase)$ since the other cases for $\ell$ cannot lead to constant approximation ratios and therefore are not interesting to study. More precisely, let us assume $\ell=\omega(1)$. In the case that $\ell<e\tbase+1$, we have the lower bound $\Omega(\minimuma)=\omega(1)$ on $1+\kappa$ from Lemma~\ref{lem:choice-of-parameters}. Regarding the case that $\ell\ge e\tbase+1 $ and $\ell=O(\tbase)$, it can be proved that
$1+\kappa = \Omega(a)=\omega(1)$,
since $\ell/\tbase=O(1)$ makes all terms constant except~$a$ in Equation~\eqref{eq:eps-with-lambert}. Then again for $a\ge \minimuma$ and $\ell=\omega(1)$, the approximation ratio is $\omega(1)$.

Now, we give the proof of Theorem~\ref{theo:approx-depend-ell}.
\begin{proofof}{Theorem~\ref{theo:approx-depend-ell}}
Let $\ell=\left(mn\ln (m/\delta)\right)^{1+ 1/\eps}$. Via Theorem~\ref{theo:eps-from-ell}, we have
\begin{align*}
    1+\kappa &\le (1+o(1))\frac{\ln (\ell/\delta)}{\ln \left(\frac{\ell}{mn\ln (m/\delta)}\right)} \\
    & = (1+o(1))\frac{(1+1/\eps)\ln \left(mn \ln (m/\delta)\right) + \ln (1/\delta)}{(1/\eps) \ln \left(mn \ln (m/\delta)\right)} \\
    & = (1+o(1))\left(  \frac{1+1/\eps}{1/\eps} + \frac{\ln (1/\delta)}{(1/\eps) \ln (mn \ln (m/\delta))} \right) \\
    & \le (1+o(1))\left(1+\frac{\ln (1/\delta)}{\ln (mn \ln (m/\delta))}\right)\left(  1+ \eps \right).
\end{align*}

For $\delta^{-1}=o(mn\ln n)$, the last expression can be bounded from above by
     $(1+o(1))\left(  1+ \eps \right)$.
\end{proofof}

A more straightforward result of Theorem~\ref{theo:approx-depend-ell} is stated in Corollary~\ref{cor:final-corollary}. In this corollary, we are aiming at expressing an asymptotic time for the algorithm to find the 
approximation, and we assume that $\eps$ is constant.

\begin{proofof}{Corollary~\ref{cor:final-corollary}}
Using Theorem~\ref{theo:approx-depend-ell}, we
will first prove the result for 
an approximation ratio of $(1+o(1))(1+\eps')$ for some 
constant~$\eps'>0$ and then bound this by
a ratio of 
at most $1+\eps$ such that 
$(1+o(1))(1+\eps')\le 1+\eps$ 
for $n$ large enough. 

Note that $\ell=(mn\ln (m/\delta))^{1+1/\eps'}$ and $\delta=\omega(1/(mn\ln n))$ and invoke Theorem~\ref{theo:approx-depend-ell}.
The asymptotic bound on $T^*$ is obtained in the following way: we note that 
$\ln(n/\delta)=O(\ln n)$ since $1/\delta=n^{O(1)}$ by assumption and $m\le n^2$. 
Since 
$\eps'>0$ is constant, we have 
$\ln(\ell)=O((1+1/\eps')\ln(mn \ln(m/\delta)))=O((1+1/\eps')\ln n)))=O(\ln n)$. Moreover, $\ell=O((mn\ln(n/\delta))^{1+\eps'})=
O((mn\ln(n))^{1+\eps'})$. Putting this together, we have 
\begin{align*}
T^* = 
O((mn\ln(n))^{1+1/\eps'} (\ln \ln n+\ln(T_0/\wmin))).
\end{align*}

We have that $1/\eps'=1/\eps+o(1)$ 
since $\eps$ and $\eps'$ are constants. 
Hence, we obtain 
the statement of the corollary.
\end{proofof}

\section{$(1+\eps)$-separated weights}
\label{sec:sa-eps-separated}
In this section, we revisit the case that the weights $w_1,\dots,w_m$
are 
\emph{$(1+\eps)$-separated}, \ie, there is a constant $\eps>0$ such 
that $w_{j}\ge (1+\eps)w_i$ if $w_j>w_i$ for all $i,j\in\{1,\dots,n\}$. As mentioned in the introduction in Theorem~\ref{thm:wegener},
Wegener proves that SA 
 with high probability finds an MST for any  
instance with $(1+\eps)$-separated weights if 
$\wmax\le 2^m$. More precisely, the proof of his  
theorem considers a time span of $O(m^{8+8/\eps})$ steps and shows that 
SA constructs an MST within this time span with 
probability $1-O(1/m)$. 

In the following, we improve this result in two ways. As acknowledged by Wegener himself, 
he did not optimize the parameters in the final bound on the runtime. Therefore, 
we can 
give an improved time bound of $O((mn\ln(n))^{1+1/\eps+o(1)}(\ln \ln n+\ln(T_0/\wmin)))$, see Theorem~\ref{theo:sa-eps-separated} for the precise, 
more general result. Moreover, we replace the assumption on the largest edge weight by the parameter~$w_{\max}$.
Essentially, we have done all work necessary to show the following theorem already in the previous section, where we proved 
an approximation result. Now, the $(1+\eps)$-separation implies that indeed an optimal solution is found 
with high probability.

\begin{theorem}
\label{theo:sa-eps-separated}
Let $\delta=\omega(1/(mn \ln (m)))$ and $\delta < 1$, $\eps>0$ be a constant.
Consider a run of SA with multiplicative cooling schedule with $\beta = 1-1/\ell$ for $\ell = (mn\ln (m/\delta))^{1+ 1/\eps+o(1)}$ and $T_0 \ge \wmax$ on an instance of the MST problem with $(1+\eps)$-separated weights. 
With probability at least~$1-\delta$, at all times $t \ge T^* \coloneqq (\ell/2) \ln \left(\frac{\minimuma T_0}{\wmin}\right)$ the current solution is optimal. Moreover, 
\[T^*=O((mn\ln(n))^{1+1/\eps+o(1)}(\ln \ln n+\ln(T_0/\wmin))).\]
\end{theorem}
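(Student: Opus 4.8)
The plan is to reduce the claim to the approximation result of the previous section, exploiting the elementary fact that with $(1+\eps)$-separated weights a sufficiently tight multiplicative approximation of an MST must in fact be an exact MST. All of the probabilistic and drift-based work is already contained in Lemmas~\ref{lem:heavy-edges}--\ref{lem:bijection} and Lemma~\ref{lem:choice-of-parameters}; the present proof only adds a rounding argument and re-uses the parameter bookkeeping from Corollary~\ref{cor:final-corollary}.

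First I would fix the cooling parameter by mimicking the proof of Corollary~\ref{cor:final-corollary}: choose a constant $\eps'\in(0,\eps)$ and set $\ell=(mn\ln(m/\delta))^{1+1/\eps'}$, which is of the form $(mn\ln(m/\delta))^{1+1/\eps+o(1)}$ demanded in the statement. Invoking Lemma~\ref{lem:bijection} with $a=\minimuma$ and the choice of $\gamma$ from Lemma~\ref{lem:choice-of-parameters} (equivalently, running the argument of Theorem~\ref{theo:approx-depend-ell} with $\eps'$ in place of $\eps$), we obtain that with probability at least $1-\delta$ the solution $\Tmst'$ of SA at time $\tend=t_{\wmin}$ and a minimum spanning tree $\Tmst^*$ satisfy $w_{\Tmst^*}(k)\le w_{\Tmst'}(k)<(1+\kappa)w_{\Tmst^*}(k)$ for all $k\in[1..n-1]$, where $1+\kappa\le(1+o(1))(1+\eps')$. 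Since $\eps'<\eps$ is constant and the $o(1)$ term vanishes, for $n$ large enough we have $1+\kappa<1+\eps$.

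Next I would use the separation. The values $w_{\Tmst'}(k)$ and $w_{\Tmst^*}(k)$ are both weights of edges of $G$, and these weights are $(1+\eps)$-separated, so $w_{\Tmst'}(k)>w_{\Tmst^*}(k)$ would force $w_{\Tmst'}(k)\ge(1+\eps)w_{\Tmst^*}(k)$, contradicting $w_{\Tmst'}(k)<(1+\kappa)w_{\Tmst^*}(k)<(1+\eps)w_{\Tmst^*}(k)$. Hence $w_{\Tmst'}(k)=w_{\Tmst^*}(k)$ for every $k$, and summing over $k$ gives $w(\Tmst')=w(\Tmst^*)$, i.e.\ $\Tmst'$ is a minimum spanning tree. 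The ``at all times $t\ge T^*$'' part is inherited: by Lemma~\ref{lem:heavy-edges} (applied with $w=\wmin$) no edge of weight at least $\wmin$ enters the solution after $t_{\wmin}=\tend\le T^*=(\ell/2)\ln(\minimuma\, T_0/\wmin)$, and a spanning tree cannot lose an edge without becoming disconnected while disconnected solutions are never accepted, so the solution stays equal to $\Tmst'$ for all $t\ge\tend$. Finally, the runtime estimate $T^*=O((mn\ln n)^{1+1/\eps+o(1)}(\ln\ln n+\ln(T_0/\wmin)))$ is read off exactly as in the proof of Corollary~\ref{cor:final-corollary}, using $\delta^{-1}=n^{O(1)}$, $m\le n^2$, $\ln\minimuma=O(\ln\ln n)$, $\ln\ell=O(\ln n)$, and $1/\eps'=1/\eps+o(1)$.

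I expect the proof to be essentially routine given the earlier lemmas; the only step warranting care is pushing the approximation ratio $1+\kappa$ \emph{strictly} below the separation factor $1+\eps$ rather than merely below $(1+o(1))(1+\eps)$, which is precisely what the device $\eps'<\eps$ borrowed from Corollary~\ref{cor:final-corollary} accomplishes.
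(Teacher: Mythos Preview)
Your proposal is correct and follows essentially the same approach as the paper: invoke Lemma~\ref{lem:bijection} with $a=\minimuma$ and $\ell=(mn\ln(m/\delta))^{1+1/\eps'}$ for a constant $\eps'<\eps$, bound $1+\kappa\le(1+o(1))(1+\eps')<1+\eps$ via Lemma~\ref{lem:choice-of-parameters}, and then use the $(1+\eps)$-separation to force $w_{\Tmst'}(k)=w_{\Tmst^*}(k)$, with the $T^*$ estimate lifted from Corollary~\ref{cor:final-corollary}. Your write-up is in fact slightly more explicit than the paper's in spelling out the separation contradiction and the ``at all times $t\ge T^*$'' invariance via Lemma~\ref{lem:heavy-edges}.
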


\begin{proof} We first prove the result for $(1+o(1))(1+\eps')$-separated weights for some constant~$\eps'>0$. Then we prove the result for $(1+\eps)$-separated weights such that $(1+o(1))(1+\eps')\le (1+\eps)$ for~$n$ large enough.

Using Lemma~\ref{lem:bijection}, with probability~$1-\delta$, we have
$w_{\Tmst^*}(k) \le w_{\Tmst'}(k) < (1+\kappa) w_{\Tmst^*}(k)$ for each $k\in [1..n-1]$. 
The $(1+\kappa)$-separated graphs do not have 
edge weight between $w_{\Tmst^*}(k)$ and $(1+\kappa) w_{\Tmst^*}(k)$ except $w_{\Tmst^*}(k)$. Therefore, the algorithm finds an optimal solution.

We need to bound $1+\kappa$ using the assumptions in the statement. By setting $a=\minimuma$ and using Lemma~\ref{lem:choice-of-parameters} for $\ell = \left(mn\ln (m/\delta)\right)^{1+1/{\eps'}}$, we bound $1+\kappa$ from above by $(1+o(1))(1+\eps')$ similarly to the proof of Theorem~\ref{theo:approx-depend-ell}. Since $\eps$ and $\eps'$ are constants and we have $1/\eps'=1/\eps+o(1)$, we obtain the claim for  $(1+\eps)$-separated weights.

Regarding $T^*$, since 
$\eps>0$ is constant, we have 
$\ln(\ell)=O((1+1/\eps+o(1))\ln(mn\ln (m/\delta)))=O((1+1/\eps+o(1))\ln n)))=O(\ln n)$. Moreover, $\ell=O((mn\ln(n/\delta))^{1+\eps+o(1)})=
O((mn\ln(n))^{1+\eps+o(1)})$. Putting this together, we have 
\[
T^* = 
O((mn\ln(n))^{1+1/\eps+o(1)} (\ln \ln n+\ln(T_0/\wmin))). \qedhere\] 
\end{proof}

\section{Conclusions}
We have shown that simulated annealing is a polynomial-time approximation scheme for the minimum spanning tree problem, thereby proving a conjecture by Wegener \citep{Wegener05}. Our analyses use state-of-the-art methods and have led to improved results in the case of $(1+\epsilon)$-separated weights, where simulated annealing yields an optimal solution with high probability.
Our main result is one of the rare examples where simple randomized search heuristics, with a straightforward representation and objective function, serve as polynomial-time approximation scheme. 

Since the runtime analysis of simulated annealing is still underrepresented in the theory of randomized search heuristics, our understanding of its  working principles is still limited. In particular, we do not have a clear characterization of the fitness landscapes in which its non-elitism, along with a cooling schedule, is more efficient than global search. The study of the Metropolis Algorithm for the DLB problem
in~\cite{WangZD21} 
and our analysis on the minimum spanning tree problem might indicate that  landscapes with many, but easy to leave local optima are beneficial; however, more research is needed to support this conjecture. 

\section*{Acknowledgement}
 This work was supported by a public grant as part of the
Investissements d'avenir project, reference ANR-11-LABX-0056-LMH,
LabEx LMH, and a grant by the Independent Research Fund Denmark (DFF-FNU  8021-00260B).

\bibliographystyle{mynatbib_english}

\bibliography{alles_ea_master,references,ich_master}

\appendix

}

\end{document}